\newcommand{\E}{\mathds{E}}
\let\P\undefined
\newcommand{\P}{\mathds{P}}
\newcommand{\R}{\mathbb{R}}
\let\t\undefined
\newcommand{\t}{\top}
\newcommand{\w}{\mathbf{w}}
\let\v\undefined
\newcommand{\v}{\mathbf{v}}
\newcommand{\x}{\mathbf{x}}
\newcommand{\Z}{\mathbf{Z}}
\newcommand{\z}{\mathbf{z}}
\newcommand{\g}{\mathbf{g}}
\newcommand{\q}{\mathbf{q}}
\newcommand{\I}{\mathbf{I}}
\newcommand{\1}{\mathbf{1}}
\newcommand{\0}{\mathbf{0}}
\newcommand{\Q}{\mathcal{Q}}
\newcommand{\p}{\mathrm{proj}}
\newcommand{\la}{\langle}
\newcommand{\ra}{\rangle}
\newcommand{\tv}{\tilde{\v}}
\newcommand{\tw}{\tilde{\w}}
\newtheorem{prop}{Proposition}
\newtheorem{thm}{Theorem}
\newtheorem{lem}{Lemma}
\newtheorem{cor}{Corollary}
\newtheorem{rem}{Remark}
\begin{document}

\title{Blended Coarse Gradient Descent for Full Quantization of Deep Neural Networks}
\author{Penghang Yin\thanks{P. Yin, S. Zhang and J. Lyu contributed equally.} \and Shuai Zhang \and Jiancheng Lyu \and Stanley Osher \and Yingyong Qi \and Jack Xin*
}
\titlerunning{Blended Coarse Gradient Descent} 

\institute{Penghang Yin \and Stanley Osher \at
              Department of Mathematics, University of California at Los Angeles, Los Angeles, CA 90095 \\
              \email{yph@ucla.edu, sjo@math.ucla.edu}           
           \and
           Shuai Zhang \and Yingyong Qi \at
              Qualcomm AI Research, San Diego, CA 92121 \\
              \email{shuazhan@qti.qualcomm.com, yingyong@qti.qualcomm.com}
          \and
          Jiancheng Lyu \and Jack Xin \at
          Department of Mathematics, University of California at Irvine, Irvine, CA 92697 \\
          \email{jianchel@uci.edu; jxin@math.uci.edu, *corresponding author, (949)-331-6314.}
}

\date{Received: date / Accepted: date}

\maketitle

\begin{abstract}
Quantized deep neural networks (QDNNs) are attractive due to their much lower memory storage and faster inference speed than their regular full precision counterparts. To maintain the same performance level especially at low bit-widths, QDNNs must be retrained. Their training involves piecewise constant activation functions and discrete weights, hence mathematical challenges arise. We introduce the notion of coarse gradient and propose the blended coarse gradient descent (BCGD) algorithm, for training fully quantized neural networks. Coarse gradient is generally not a gradient of any function but an artificial ascent direction. The weight update of BCGD goes by coarse gradient correction of a weighted average of the full precision weights and their quantization (the so-called blending), which yields sufficient descent in the objective value and thus accelerates the training. Our experiments demonstrate that this simple blending technique is very effective for quantization at extremely low bit-width such as binarization. In full quantization of ResNet-18 for ImageNet classification task, BCGD gives 64.36\% top-1 accuracy with binary weights across all layers and 4-bit adaptive activation. If the weights in the first and last layers are kept in full precision, this number increases to 65.46\%. As theoretical justification, we show convergence analysis of coarse gradient descent for a two-linear-layer neural network model with Gaussian input data, and prove that the expected coarse gradient correlates positively with the underlying true gradient. 

\keywords{weight/activation quantization \and blended coarse gradient descent \and sufficient descent property \and deep neural networks}
\subclass{90C35, 90C26, 90C52, 90C90.}

\end{abstract}
\thispagestyle{empty}

\newpage 
\section{Introduction}
\setcounter{page}{1}
Deep neural networks (DNNs) have seen enormous success in image and speech classification, natural language processing, health sciences  among other big data driven 
applications in recent years. 
However, DNNs typically require hundreds of megabytes of memory storage for
the trainable full-precision 
floating-point parameters, and billions of FLOPs
(floating point operations per second) to make a single inference. This makes the deployment of DNNs on mobile 
devices a challenge. Some considerable recent efforts have been devoted to the training of low precision  (quantized) models for substantial memory savings and computation/power efficiency, while nearly maintaining the performance 
of full-precision networks. Most works to date 
are concerned with weight quantization (WQ) 
\cite{bc_15,twn_16,xnornet_16,lbwn_16,carreira1,BR_18}. In \cite{he2018relu}, He et al. theoretically justified for the applicability of WQ models by investigating their expressive power. Some also studied activation function quantization (AQ) \cite{bnn_16,xnornet_16,Hubara2017QuantizedNN,halfwave_17,entropy_17,dorefa_16}, which utilize an external process outside of the network training. This is different from WQ at 4 bit or under, which must be achieved through network training. Learning activation function $\sigma$ as a parametrized family ($\sigma=\sigma(x,\alpha)$) and part of network training has been studied in \cite{paramrelu} for parametric rectified linear unit, and was recently extended to uniform AQ in \cite{pact}. In uniform AQ, $\sigma(x,\alpha)$ is a step (or piecewise constant) function in $x$, and the parameter $\alpha$ determines the height and length of the steps. In terms of the partial derivative of $\sigma(x,\alpha)$ in $\alpha$, a two-valued proxy derivative of the parametric activation function (PACT) was proposed \cite{pact}, although we will present an almost everywhere (a.e.) exact one in this paper.

\medskip

The mathematical difficulty in training activation quantized networks is that the loss function becomes a piecewise constant function with sampled stochastic gradient a.e. zero, which is undesirable for back-propagation. A simple and effective way around this problem is to use a (generalized) straight-through (ST) estimator or derivative of a related (sub)differentiable function \cite{hinton2012neural,bengio2013estimating,bnn_16,Hubara2017QuantizedNN} such as clipped rectified linear unit (clipped ReLU) \cite{halfwave_17}. The idea of ST estimator dates back to the perceptron algorithm \cite{rosenblatt1957perceptron,rosenblatt1962principles} proposed in 1950s for learning single-layer perceptrons with binary output. For multi-layer networks with hard threshold activation (a.k.a. binary neuron), Hinton \cite{hinton2012neural} proposed to use the derivative of identity function as a proxy in back-propagation or chain rule, similar to the perceptron algorithm. The proxy derivative used in backward pass only was referred as straight-through estimator in \cite{bengio2013estimating}, and several variants of ST estimator \cite{bnn_16,Hubara2017QuantizedNN,halfwave_17} have been proposed for handling quantized activation functions since then. A similar situation, where the derivative of certain layer composited in the loss function is unavailable for back-propagation, has also been brought up by \cite{wang2018deep} recently while improving accuracies of DNNs by replacing the softmax classifier layer with an implicit weighted nonlocal Laplacian layer. For the training of the latter, the derivative of a pre-trained fully-connected layer was used as a surrogate \cite{wang2018deep}. 

\medskip

On the theoretical side, while the convergence of the single-layer perception algorithm has been extensively studied \cite{widrow199030,freund1999large}, there is almost no theoretical understanding of the unusual `gradient' output from the modified chain rule based on ST estimator. Since this unusual `gradient' is certainly not the gradient of the objective function, then a question naturally arises: \emph{how does it correlate to the objective function?} One of the contributions in this paper is to answer this question. Our main contributions are threefold:

\begin{enumerate}
\item Firstly, we introduce the notion of coarse derivative and cast the early ST estimators or proxy partial derivatives of $\sigma(x,\alpha)$ in $\alpha$ including the two-valued PACT of \cite{pact} as examples. The coarse derivative is non-unique. We propose a three-valued coarse partial derivative of the quantized activation function $\sigma(x,\alpha)$ in $\alpha$ that can outperform the two-valued one \cite{pact} in network training. We find that unlike the partial derivative $\frac{\partial \sigma}{\partial x}(x,\alpha)$ which vanishes, the a.e. partial derivative of $\sigma(x,\alpha)$ in $\alpha$ is actually multi-valued (piecewise constant). Surprisingly, this a.e. accurate derivative is empirically less useful than the coarse ones in fully quantized network training. \\

\item Secondly, we propose a novel accelerated training algorithm for fully quantized networks, termed blended coarse gradient descent method (BCGD). 
Instead of correcting the current full precision weights with coarse gradient at their quantized values like in the popular BinaryConnect scheme \cite{bc_15,bnn_16,xnornet_16,twn_16,dorefa_16,halfwave_17,Goldstein_17,BR_18}, the BCGD weight update goes by coarse gradient correction of a suitable average of the full precision weights and their quantization. We shall show that BCGD satisfies the sufficient descent property for objectives with Lipschitz gradients, while BinaryConnect does not unless an approximate orthogonality condition holds for the iterates \cite{BR_18}. \\

\item Our third contribution is the mathematical analysis of coarse gradient descent for a two-layer network with binarized ReLU activation function and i.i.d. unit Gaussian data. We provide an explicit form of coarse gradient based on proxy derivative of regular ReLU, and show that when there are infinite training data, the negative expected coarse gradient gives a descent direction for minimizing the expected training loss. Moreover, we prove that a normalized coarse gradient descent algorithm only converges to either a global minimum or a potential spurious local minimum. This answers the question.

\end{enumerate}

\medskip

The rest of the paper is organized as follows. In section 2, we discuss the concept of coarse derivative and give examples for quantized activation functions. In section 3, we present the joint weight and activation quantization problem, 
and BCGD algorithm satisfying the sufficient descent property. For readers' convenience, we also review formulas on 1-bit, 2-bit and 4-bit weight quantization used later in our numerical experiments. In section 4, we give details of fully quantized network training, including the disparate learning rates on weight and $\alpha $. We illustrate the enhanced validation accuracies of BCGD over BinaryConnect, and 3-valued coarse $\alpha$ partial derivative of $\sigma$ over 2-valued and a.e. $\alpha$ partial derivative in case of 4-bit activation, and (1,2,4)-bit weights on CIFAR-10 image datasets. We show top-1 and top-5 validation accuracies of ResNet-18 with all convolutional layers quantized at 1-bit weight/4-bit activation (1W4A), 4-bit weight/4-bit activation (4W4A), 
and 4-bit weight/8-bit activation (4W8A), using 3-valued and 2-valued $\alpha$ partial derivatives. The 3-valued $\alpha$ partial derivative out-performs the two-valued with larger margin in the low bit regime. 
The accuracies degrade gracefully from 4W8A to 1W4A while all the convolutional layers are quantized.
The 4W8A accuracies with either the 3-valued 
or the 2-valued $\alpha$ partial derivatives are within 1\% of those of the full precision network. If the first and last convolutional layers are in full precision, our top-1 (top-5) accuracy of ResNet-18 at 1W4A with 3-valued coarse $\alpha$-derivative is 4.7 \% (3\%) higher than that of HWGQ \cite{halfwave_17} on ImageNet dataset. This is in part due to the value of parameter $\alpha$ being learned without any statistical assumption.

\bigskip

\noindent{\bf Notations.}
$\|\cdot\|$ denotes the Euclidean norm of a vector or the spectral norm of a matrix; $\|\cdot\|_\infty$ denotes the $\ell_\infty$-norm. $\0\in\R^n$ represents the vector of zeros, whereas $\1\in\R^n$ the vector of all ones. We denote vectors by bold small letters and matrices by bold capital ones. For any $\w, \, \z\in\R^n$, $\w^\t\z = \la \w, \z \ra = \sum_{i} w_i z_i$ is their inner product. $\w\odot\z$ denotes the Hadamard product whose $i$-th entry is given by $(\w\odot\z)_i = w_iz_i$. 

\section{Activation Quantization}\label{sec:activ}
In a network with quantized activation, given a training sample of input $\Z$ and label $u$, the associated sample loss is a composite function of the form:
\begin{equation}\label{sLoss}
\ell(\w, \bm{\alpha} ; \{ \Z, u \}) := \ell(\w_l*\sigma(\w_{l-1}*\cdots \w_2*\sigma(\w_1*\Z, \alpha_1) \cdots, \alpha_{l-1} ); \, u),
\end{equation}
where $\w_j$ contains the weights in the $j$-th linear (fully-connected or convolutional) layer, `$*$' denotes either matrix-vector product or convolution operation; reshaping is necessary to avoid mismatch in dimensions. The $j$-th quantized ReLU $\sigma(\x_j,\alpha_j)$ acts element-wise on the vector/tensor $\x_j$ output from the previous linear layer, which is parameterized by a trainable scalar $\alpha_j>0$ known as the resolution. For practical hardware-level implementation, we are most interested in uniform quantization:
\begin{equation}\label{eq:qrelu}
\sigma\left(x,\alpha\right) = \begin{cases}
0, \quad & \mathrm{if}\quad x \leq 0,\\
k\alpha, \quad & \mathrm{if}\quad \left(k-1\right)\alpha < x \leq k\alpha, \; k = 1, 2, \dots, 2^{b_a}-1,\\
\left(2^{b_a}-1\right)\alpha,\quad & \mathrm{if}\quad x > \left(2^{b_a}-1\right)\alpha,
\end{cases}
\end{equation}
where $x$ is the scalar input, $\alpha>0$ the resolution, $b_a \in \mathbb{Z}_+$ the bit-width of activation and $k$ the quantization level. For example, in 4-bit activation quantization (4A), we have $b_a = 4$ and $2^{b_a} = 16$ quantization levels including the zero. 

\medskip

Given $N$ training samples, we train the network with quantized ReLU by solving the following empirical risk minimization
\begin{equation}\label{eq:training}
\min_{\w,\boldsymbol\alpha}\, f(\w,\boldsymbol\alpha) :=  \frac{1}{N}\sum_{i=1}^N \; \ell(\w, \bm{\alpha} ; \{ \Z^{(i)},u^{(i)} \})
\end{equation}
In gradient-based training framework, one needs to evaluate the gradient of the sample loss (\ref{sLoss}) using the so-called back-propagation (a.k.a. chain rule), which involves the computation of partial derivatives $\frac{\partial \sigma}{\partial x}$ and $\frac{\partial \sigma}{\partial \alpha}$. Apparently, the partial derivative of $\sigma\left(x,\alpha\right)$ in $x$ is almost everywhere (a.e.) zero. After composition, this results in a.e. zero gradient of $\ell$ with respect to (w.r.t.) $\{\w_j\}_{j=1}^{l-1}$ and $\{\alpha_j\}_{j=1}^{l-2}$ in (\ref{sLoss}), causing their updates to become stagnant. To see this, we abstract the partial gradients $\frac{\partial \ell}{\partial \w_{l-1}}$ and $\frac{\partial \ell}{\partial \alpha_{l-2}}$, for instances, through the chain rule as follows:
$$
\frac{\partial \ell}{\partial \w_{l-1}}(\w,\bm{\alpha}; \{\Z,u\}) = \sigma(\x_{l-2},\alpha_{l-2}) \circ \frac{\partial\sigma}{\partial x}(\x_{l-1},\alpha_{l-1})\circ \w_l^\t \circ \nabla\ell(\x_l; u) 
$$
and
$$
\frac{\partial \ell}{\partial \alpha_{l-2}}(\w,\bm{\alpha}; \{\Z,u\}) = \frac{\partial \sigma}{\partial \alpha}(\x_{l-2},\alpha_{l-2}) \circ \w_{l-1}^\t \circ \frac{\partial\sigma}{\partial x}(\x_{l-1},\alpha_{l-1})\circ \w_l^\t \circ \nabla\ell(\x_l; u), 
$$
where we recursively define $\x_1 = \w_1*\Z$, and $\x_j = \w_j*\sigma(\x_{j-1}, \alpha_{j-1})$ for $j\geq2$ as the output from the $j$-th linear layer, and `$\circ$' denotes some sort of proper composition in the chain rule. It is clear that the two partial gradients are zeros a.e. because of the term $\frac{\partial\sigma}{\partial x}(\x_{l-1},\alpha_{l-1})$. In fact, the automatic differentiation embedded in deep learning platforms such as PyTorch \cite{pytorch} would produce precisely zero gradients.

\medskip

To get around this, we use a proxy derivative or so-called ST estimator for back-propagation. By overloading the notation `$\approx$', we denote the proxy derivative by
\begin{align*}
\frac{\partial \sigma}{\partial x}\left(x,\alpha\right) \approx
\begin{cases}
0, \quad & \mathrm{if}\quad x \leq 0,\\
1, \quad & \mathrm{if}\quad 0 < x \leq \left(2^{b_a}-1\right)\alpha,\\
0,\quad & \mathrm{if}\quad x > \left(2^{b_a}-1\right)\alpha.
\end{cases}
\end{align*}
The proxy partial derivative has a non-zero value in the middle to reflect the overall variation of $\sigma$, which can be viewed as the derivative of the large scale (step-back) view of $\sigma$ in $x$, or the derivative of the clipped ReLU \cite{halfwave_17}:
\begin{equation}\label{eq:crelu}
\tilde{\sigma}(x,\alpha) = 
\begin{cases}
0, \quad & \mathrm{if}\quad x \leq 0,\\
x, \quad & \mathrm{if}\quad 0 < x \leq (2^{b_a}-1)\alpha, \\
\left(2^{b_a}-1\right)\alpha,\quad & \mathrm{if}\quad x > \left(2^{b_a}-1\right)\alpha.
\end{cases}
\end{equation}

\begin{figure}[ht]
\centering
\begin{tabular}{cc}
\includegraphics[width=0.48\textwidth]{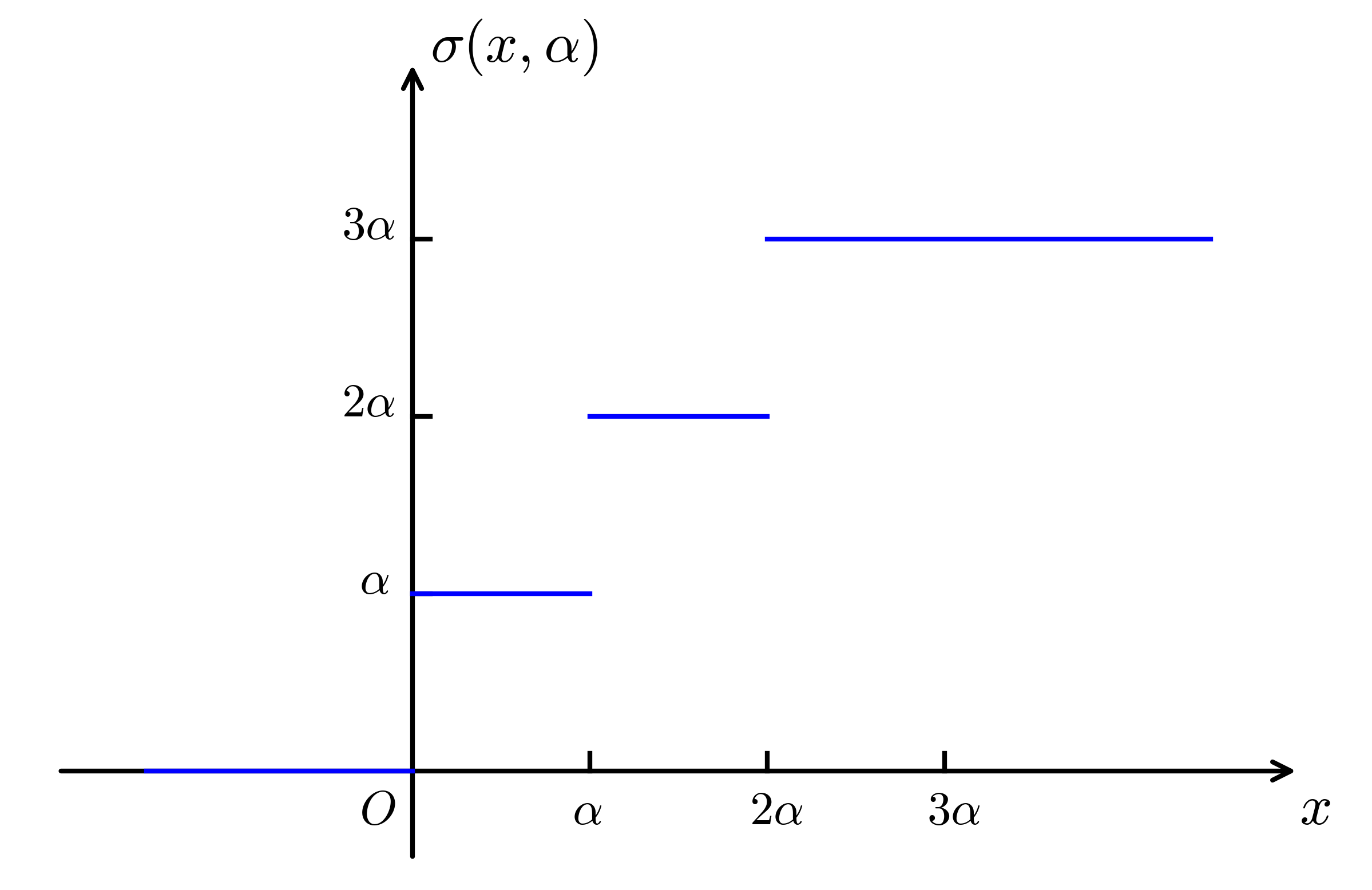}
\includegraphics[width=0.48\textwidth]{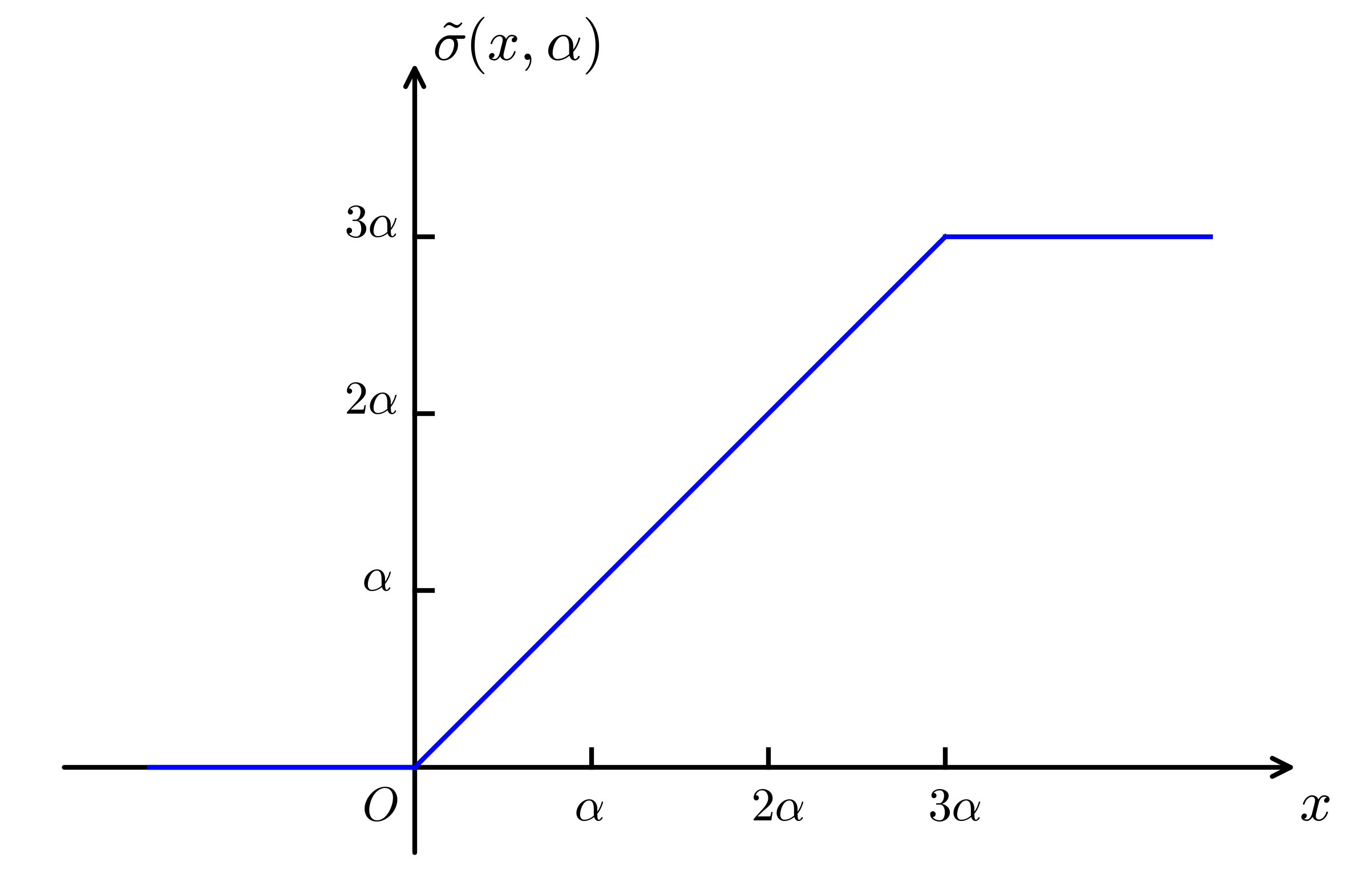}
\end{tabular}
\caption{Left: plot of 2-bit quantized ReLU $\sigma(x,\alpha)$ in $x$. Right: plot of the associated clipped ReLU $\tilde{\sigma}(x,\alpha)$ in $x$.}\label{fig:relu}
\end{figure}


On the other hand, we find the a.e. partial derivative of $\sigma(x,\alpha)$ w.r.t. $\alpha$ to be
\begin{align*}
\dfrac{\partial \sigma}{\partial \alpha}(x,\alpha) = \begin{cases}
0, \quad & \mathrm{if}\quad x \leq 0,\\
k, \quad & \mathrm{if}\quad \left(k-1\right)\alpha < x \leq k\alpha,\;\; k=1,2,\cdots, 2^{b_a} -1;\\
2^{b_a}-1,\quad & \mathrm{if}\quad x > \left(2^{b_a}-1\right)\alpha.
\end{cases}
\end{align*}

\noindent Surprisingly, this a.e. derivative is not the best in terms of accuracy or computational cost in training, as will be reported in section \ref{sec:exper}. We propose an empirical three-valued proxy partial derivative in $\alpha$ as follows
\begin{align*}
\dfrac{\partial \sigma}{\partial \alpha}(x,\alpha) \approx \begin{cases}
0, \quad & \mathrm{if}\quad x \leq 0,\\
2^{(b_a-1)}, \quad & \mathrm{if}\quad 0 < x \leq \left(2^{b_a}-1\right)\alpha,\\
2^{b_a}-1,\quad & \mathrm{if}\quad x > \left(2^{b_a}-1\right)\alpha.
\end{cases}
\end{align*}
The middle value $2^{b_a-1}$ is the arithmetic mean of the intermediate $k$ values of the a.e. partial derivative above. Similarly, a more coarse two-valued proxy, same as PACT \cite{pact} which was derived differently, follows by zeroing out all 
the nonzero values except their maximum:
\begin{align*}
\dfrac{\partial \sigma}{\partial \alpha}(x,\alpha) \approx \begin{cases}
0, \quad & \mathrm{if}\quad x \leq \left(2^{b_a}-1\right)\alpha,\\
2^{b_a}-1,\quad & \mathrm{if}\quad x > \left(2^{b_a}-1\right)\alpha.
\end{cases}
\end{align*}
This turns out to be exactly the partial derivative $\dfrac{\partial \tilde{\sigma}}{\partial \alpha}(x,\alpha)$ of the clipped ReLU defined in (\ref{eq:crelu}).

\medskip

We shall refer to the resultant composite `gradient' of $f$ through the modified chain rule and averaging as coarse gradient. While given the name `gradient', we believe it is generally not the gradient of any smooth function.  It, nevertheless, somehow exploits the essential information of the piecewise constant function $f$, and its negation provides a descent direction for the minimization. In section \ref{sec:analy}, we will validate this claim by examining a two-layer network with i.i.d. Gaussian data. We find that when there are infinite number of training samples, the overall training loss $f$ (i.e., population loss) becomes pleasantly differentiable whose gradient is non-trivial and processes certain Lipschitz continuity. More importantly, we shall show an example of expected coarse gradient that provably forms an acute angle with the underlying true gradient of $f$ and only vanishes at the possible local minimizers of the original problem.

\medskip

During the training process, the vector $\bm{\alpha}$ (one component per activation layer) should be prevented from being either too small or too large. Due to the sensitivity of $\bm{\alpha}$, we propose a two-scale training and set the learning rate of $\bm{\alpha}$ to be the learning rate of weight $\w$ multiplied by a rate factor far less than 1, which may be varied depending on network architectures. That rate factor effectively helps quantized network converge steadily and prevents $\bm{\alpha}$ from vanishing.

\section{Full Quantization}
Imposing the quantized weights amounts to adding a discrete set-constraint $\w\in\Q$ to the optimization problem (\ref{eq:training}). Suppose $M$ is the total number of weights in the network. For commonly used $b_w$-bit layer-wise quantization, $\Q\subset\R^M$ takes the form of $\Q_1 \times \Q_2 \cdots\times \Q_l$, meaning that the weight tensor in the $j$-th linear layer is constrained in the form $\w_j = \delta_j \q_j \in\Q_j$ for some adjustable scaling factor $\delta_j>0$ shared by weights in the same layer. Each component of $\q_j$ is drawn from the quantization set given by $\{\pm 1\}$ for $b_w = 1$ (binarization) and $\{0,\pm 1,\cdots, \pm(2^{b_w-1} -1)\}$ for $b_w \geq 2$. This assumption on $\Q$ generalizes those of the 1-bit BWN \cite{xnornet_16} and the 2-bit TWN \cite{twn_16}.
As such, the layer-wise weight and activation quantization problem here can be stated abstractly as follows
\begin{equation}
\min_{\w,\boldsymbol\alpha}\, f(\w,\boldsymbol\alpha) \;\; \mbox{subject to} \;\; \w \in \Q=\Q_1 \times \Q_2 \cdots\times \Q_l, \label{e1}
\end{equation}
where the training loss $f(\w, \bm{\alpha})$ is defined in (\ref{eq:training}). Different from activation quantization, one bit is taken to represent the signs. For ease of presentation, we only consider the network-wise weight quantization throughout this section, i.e., weights across all the layers share the same (trainable) floating scaling factor $\delta>0$, or simply, $\Q = \R_+\times\{\pm 1\}^M$ for $b_w =1$ and $\Q = \R_+\times\left\{0,\pm 1, \dots, \pm(2^{b_w-1}-1)\right\}^M$ for $b_w\geq2$.

\subsection{Weight Quantization}
Given a float weight vector $\w_f$, the quantization of $\w_f$ is basically the following optimization problem for computing the projection of $\w_f$ onto set $\Q$
\begin{equation}\label{eq:proj}
\p_\Q(\w_f) := \arg\min_{\w\in\Q} \; \|\w- \w_f\|^2 .
\end{equation}
Note that $\Q$ is a non-convex set, so the solution of (\ref{eq:proj}) may not be unique. When $b_w=1$, we have the binarization problem 
\begin{equation}\label{eq:binary}
\min_{\delta, \q} \; \|\delta \, \q -\w_f\|^2 \quad \mbox{subject to} \quad \delta>0, \;  \q\in \left\{\pm1\right\}^M.
\end{equation}
For $b_w\geq 2$, the projection/quantization problem (\ref{eq:proj}) can be reformulated as 
\begin{equation}\label{eq:quant}
\min_{\delta, \q} \; \|\delta \, \q -\w_f\|^2 \quad \mbox{subject to} \quad \delta>0, \;  \q\in \left\{0,\pm1,\cdots, \pm (2^{b_w-1} -1)\right\}^M.
\end{equation}
It has been shown that the closed form (exact) solution of (\ref{eq:binary}) can be computed at $O(M)$ complexity for (1-bit) binarization \cite{xnornet_16} and at $O(M\log(M))$ complexity for (2-bit) ternarization \cite{lbwn_16}. An empirical ternarizer of $O(M)$ complexity has also been proposed \cite{twn_16}. At wider bit-width $b_w\geq 3$, accurately solving (\ref{eq:quant}) becomes computationally intractable due to the combinatorial nature of the problem \cite{lbwn_16}.  

\medskip

The problem (\ref{eq:quant}) is basically a constrained $K$-means clustering problem of 1-D points \cite{BR_18} with the centroids being $\delta$-spaced. It in principle can be solved by a variant of the classical Lloyd's algorithm \cite{lloyd} via an alternating minimization procedure. It iterates between the assignment step ($\q$-update) and centroid step ($\delta$-update). In the $i$-th iteration, fixing the scaling factor $\delta^{i-1}$, each entry of $\q^{i}$ is chosen from the quantization set, so that $\delta^{i-1} \q^i$ is as close as possible to $\w_f$. In the $\delta$-update, the following quadratic problem 
$$
\min_{\delta\in\R} \; \|\, \delta \, \q^{i} - \w_f\|^2  
$$
is solved by $\delta^{i} = \frac{(\q^{i})^\t\w_f}{\|\q^{i}\|^2}$. Since quantization (\ref{eq:proj}) is required in every iteration, to make this procedure practical, we just perform a single iteration of Lloyd's algorithm by empirically initializing $\delta$ to be $\frac{2}{2^{b_w} - 1} \|\w_f\|_\infty$, which is derived by setting
$$
\frac{\delta}{2}\left((2^{b_w-1}-1) + 2^{b_w-1}\right) = \|\w_f\|_\infty.
$$
This makes the large components in $\w_f$ well clustered.

\medskip

First introduced in \cite{bc_15} by Courbariaux et al., the BinaryConnect (BC) scheme has drawn much attention in training DNNs with quantized weight and regular ReLU. It can be summarized as
$$
\w_f^{t+1} = \w_f^t - \eta \nabla f(\w^t), \; \w^{t+1} = \p_\Q(\w_f^{t+1}),
$$
where $\{\w^t\}$ denotes the sequence of the desired quantized weights, and $\{\w^t_f\}$ is an auxiliary sequence of floating weights. BC can be readily extended to full quantization regime by including the update of $\bm{\alpha}^t$ and replacing the true gradient $\nabla f(\w^t)$ with the coarse gradients from section \ref{sec:activ}. With a subtle change to the standard projected gradient descent algorithm (PGD) \cite{combettes2015stochastic}, namely
$$
\w_f^{t+1} = \w^t - \eta \nabla f(\w^t), \; \w^{t+1} = \p_\Q(\w_f^{t+1}),
$$
BC significantly outperforms PGD and effectively bypasses spurious the local minima in $\Q$ \cite{Goldstein_17}. An intuitive explanation is that the constraint set $\Q$ is basically a finite union of isolated one-dimensional subspaces (i.e., lines that pass through the origin) \cite{BR_18}. Since $\w_f^t$ is obtained near the projected point $\w^t$, the sequence $\{\w_f^t\}$ generated by PGD can get stuck in some line subspace easily when updated with a small learning rate $\eta$; see Figure \ref{fig:alg} for graphical illustrations.

\begin{figure}[ht]
\centering
\begin{tabular}{cc}
\includegraphics[width=0.48\textwidth]{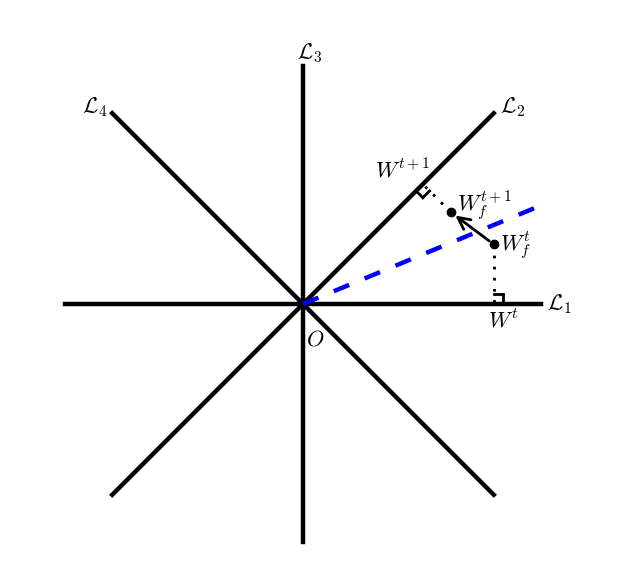}
\includegraphics[width=0.48\textwidth]{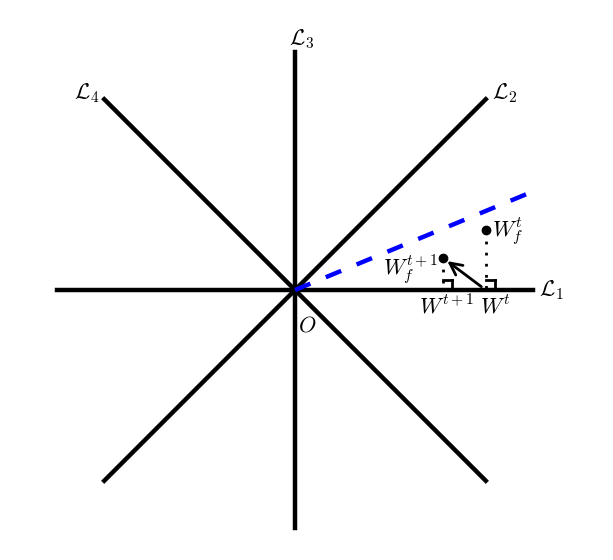}
\end{tabular}
\caption{The ternarization of two weights. The one-dimensional subspaces $\mathcal{L}_i$'s constitute the constraint set $\Q = \R_{+}\times\{0, \pm 1\}^2$. When updated with small learning rate, BC keeps searching among the subspaces (left), whereas PGD can get stagnated in $\mathcal{L}_1$ (right).
}\label{fig:alg}. 
\end{figure}

\subsection{Blended Gradient Descent and Sufficient Descent Property}
Despite the superiority of BC over PGD, we point out a drawback in regard to its convergence. While Yin et al. provided the convergence proof of BC scheme in the recent papers \cite{BR_18}, their analysis hinges on an approximate orthogonality condition which may not hold in practice; see Lemma 4.4 and Theorem 4.10 of \cite{BR_18}. Suppose $f$ has $L$-Lipschitz gradient\footnote{This assumption is valid for the population loss function; we refer readers to Lemma \ref{lem:lipschitz} in section \ref{sec:analy}.}. In light of the convergence proof in Theorem 4.10 of \cite{BR_18}, we have
\begin{equation}\label{eq:desc}
f(\w^{t+1})-f(\w^t)\leq - \frac{1}{2}\left(\frac{1}{\eta} ( \|\w^{t+1}-\w_f^t\|^2 - \|\w^t -\w_f^t\|^2 ) - L\|\w^{t+1}-\w^t\|^2\right).
\end{equation}
For the objective sequence $\{f(\w^t)\}$ to be monotonically decreasing and $\{\w^k\}$ converging to a critical point, it is crucial to have the sufficient descent property \cite{gilbert1992global} hold for sufficiently small learning rate $\eta>0$: 
\begin{equation}
f(\w^{t+1}) - f(\w^t) \leq - c \, \|\w^{t+1} -\w^{t}\|^2, \label{descent}
\end{equation}
with some positive constant $c>0$. 

\medskip

Since $\w^{t} = \arg\min_{\w\in\Q} \|\w - \w_f^t\|^2$ and $\w^{t+1}\in\Q$, it holds in (\ref{eq:desc}) that 
$$
\frac{1}{\eta}(\|\w^{t+1}-\w_f^t\|^2 - \|\w^t -\w_f^t\|^2)\geq 0.
$$
Due to non-convexity of the set $\Q$, the above term can be as small as zero even when $\w^t$ and $\w^{t+1}$ are distinct. So it is not guaranteed to dominate the right hand side of (\ref{eq:desc}). Consequently given (\ref{eq:desc}), the inequality (\ref{descent}) does not necessarily hold. Without sufficient descent, even if $\{f(\w^t)\}$ converges, the iterates $\{\w^t\}$ may not converge well to a critical point. To fix this issue, we blend the ideas of  PGD and BC, and propose the following blended gradient descent (BGD)
\begin{equation}\label{bcgd}
\w_f^{t+1} = (1-\rho)\w_f^t + \rho \w^t - \eta \nabla f(\w^t), \; \w^{t+1} = \p_\Q(\w_f^{t+1})
\end{equation}
for some blending parameter $\rho\ll 1$. In contrast, the blended gradient descent satisfies (\ref{descent}) for small enough $\eta$.
\begin{prop}\label{prop:bgd}
For $\rho\in(0,1)$, the BGD (\ref{bcgd}) satisfies
$$
f(\w^{t+1})-f(\w^t)\leq - \frac{1}{2}\left(\frac{1-\rho}{\eta}(\|\w^{t+1}-\w_f^t\|^2 - \|\w^t -\w_f^t\|^2) + \left(\frac{\rho}{\eta}-L \right)\|\w^{t+1}-\w^t\|^2\right).
$$
\end{prop}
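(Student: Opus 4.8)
\emph{Proof proposal.} The plan is to combine the standard descent lemma for $L$-smooth functions with an algebraic expansion of the cross term that is dictated by the BGD update, and then to discard one nonpositive remainder using the optimality of the projection $\p_\Q$.

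First, since $f$ has $L$-Lipschitz gradient, the descent lemma gives
$$
f(\w^{t+1}) - f(\w^t) \le \langle \nabla f(\w^t),\, \w^{t+1} - \w^t\rangle + \frac{L}{2}\|\w^{t+1} - \w^t\|^2,
$$
so it suffices to bound the inner product. Rewriting the first line of (\ref{bcgd}) yields $\eta\,\nabla f(\w^t) = (1-\rho)(\w_f^t - \w^t) - (\w_f^{t+1} - \w^t)$, hence
$$
\eta\,\langle \nabla f(\w^t),\, \w^{t+1}-\w^t\rangle = (1-\rho)\langle \w_f^t - \w^t,\, \w^{t+1}-\w^t\rangle - \langle \w_f^{t+1} - \w^t,\, \w^{t+1}-\w^t\rangle .
$$

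Next I would apply the elementary identity $2\langle u,v\rangle = \|u\|^2 + \|v\|^2 - \|u-v\|^2$ to each of the two inner products on the right: with $u = \w_f^t - \w^t$, $v = \w^{t+1}-\w^t$ (so $u-v = \w_f^t - \w^{t+1}$) in the first, and $u = \w_f^{t+1}-\w^t$, $v = \w^{t+1}-\w^t$ (so $u-v = \w_f^{t+1}-\w^{t+1}$) in the second. After collecting terms, the $\|\w^{t+1}-\w^t\|^2$ contributions combine to $-\rho\,\|\w^{t+1}-\w^t\|^2$, and one is left with a leftover term $\|\w_f^{t+1}-\w^{t+1}\|^2 - \|\w_f^{t+1}-\w^t\|^2$. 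Because $\w^{t+1} = \p_\Q(\w_f^{t+1})$ minimizes $\|\cdot - \w_f^{t+1}\|^2$ over $\Q$ and $\w^t\in\Q$, this leftover term is $\le 0$ and may be dropped from an upper bound. What remains is exactly
$$
\langle \nabla f(\w^t),\, \w^{t+1}-\w^t\rangle \le -\frac{1-\rho}{2\eta}\bigl(\|\w^{t+1}-\w_f^t\|^2 - \|\w^t-\w_f^t\|^2\bigr) - \frac{\rho}{2\eta}\|\w^{t+1}-\w^t\|^2,
$$
and substituting this into the descent lemma gives the claimed inequality.

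The computation itself is essentially bookkeeping; the one conceptual point is the choice of which remainder to discard. The blending term $\rho(\w^t - \w_f^t)$ is precisely what produces the favorable $-\frac{\rho}{\eta}\|\w^{t+1}-\w^t\|^2$ after expansion, which is what eventually allows $\rho/\eta$ to dominate the Lipschitz constant $L$ and recover the sufficient descent property (\ref{descent}); by contrast, the discarded term $\|\w_f^{t+1}-\w^{t+1}\|^2 - \|\w_f^{t+1}-\w^t\|^2$ is exactly the quantity that BC cannot control without the approximate-orthogonality condition of \cite{BR_18}. The only thing to watch is the sign bookkeeping in the expansion step and making sure the projection inequality is applied in the correct direction.
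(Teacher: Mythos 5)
Your proof is correct, and it is essentially the paper's argument in a slightly different packaging: the paper encodes the projection step as $\w^{t+1}=\arg\min_{\w\in\Q}\langle \w,\nabla f(\w^t)\rangle+\frac{1-\rho}{2\eta}\|\w-\w_f^t\|^2+\frac{\rho}{2\eta}\|\w-\w^t\|^2$ and compares objective values at $\w^{t+1}$ and $\w^t$, which yields exactly the intermediate inequality you obtain via polarization plus the projection inequality $\|\w_f^{t+1}-\w^{t+1}\|\le\|\w_f^{t+1}-\w^t\|$. Both proofs then conclude with the same application of the descent lemma, so there is no substantive difference.
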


Choosing the learning rate $\eta$ small enough so that $\rho/\eta \geq L + c$. Then inequality (\ref{descent}) follows from the above proposition, which will guarantee the convergence of (\ref{bcgd}) to a critical point by using similar arguments as in the proofs from \cite{BR_18}. 
\begin{cor}
The blended gradient descent iteration (\ref{bcgd}) satisfies the sufficient descent property (\ref{descent}).
\end{cor}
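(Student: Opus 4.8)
The plan is to read off the Corollary directly from Proposition~\ref{prop:bgd} once two elementary facts are brought in: the optimality of the quantization (projection) step, and the freedom to take the learning rate small. Starting from the estimate of Proposition~\ref{prop:bgd},
\[
f(\w^{t+1})-f(\w^t)\leq - \frac{1}{2}\left(\frac{1-\rho}{\eta}\big(\|\w^{t+1}-\w_f^t\|^2 - \|\w^t -\w_f^t\|^2\big) + \Big(\frac{\rho}{\eta}-L \Big)\|\w^{t+1}-\w^t\|^2\right),
\]
I would bound the two bracketed terms in turn.

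For the first term, since $\w^t=\p_\Q(\w_f^t)=\arg\min_{\w\in\Q}\|\w-\w_f^t\|^2$ and $\w^{t+1}\in\Q$ is feasible for the very same problem, we get $\|\w^{t+1}-\w_f^t\|^2\ge\|\w^t-\w_f^t\|^2$; because $\rho\in(0,1)$ makes $1-\rho>0$, this whole term is $\ge 0$ and may be discarded from the upper bound, leaving $f(\w^{t+1})-f(\w^t)\le -\tfrac12\big(\tfrac{\rho}{\eta}-L\big)\|\w^{t+1}-\w^t\|^2$. For the second term, it then suffices to pick any step size $\eta\in(0,\rho/L)$ and set $c:=\tfrac12(\rho/\eta-L)>0$, which yields exactly $f(\w^{t+1})-f(\w^t)\le -c\,\|\w^{t+1}-\w^t\|^2$, i.e.\ the sufficient descent property~(\ref{descent}). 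The hypothesis that $f$ has $L$-Lipschitz gradient is what makes this meaningful, and it holds for the population loss by Lemma~\ref{lem:lipschitz}.

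I do not anticipate any real difficulty in the Corollary itself — all of the content sits in Proposition~\ref{prop:bgd}, whose proof is the step that needs care. There I would begin with the $L$-smoothness descent lemma $f(\w^{t+1})\le f(\w^t)+\la\nabla f(\w^t),\w^{t+1}-\w^t\ra+\tfrac{L}{2}\|\w^{t+1}-\w^t\|^2$, use the blended update~(\ref{bcgd}) to substitute $\eta\,\nabla f(\w^t)=(1-\rho)\w_f^t+\rho\w^t-\w_f^{t+1}$ into the inner product, and then repeatedly apply the polarization identity $2\la a,b\ra=\|a\|^2+\|b\|^2-\|a-b\|^2$ to regroup the cross terms into the squared norms $\|\w^{t+1}-\w_f^t\|^2$, $\|\w^t-\w_f^t\|^2$ and $\|\w^{t+1}-\w^t\|^2$ that appear in the claim. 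The only delicate point is the bookkeeping ensuring the blending weights $1-\rho$ and $\rho$ attach to the correct squared-norm differences; once that is settled, the stated inequality — and hence the Corollary — follows.
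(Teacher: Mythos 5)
Your argument for the Corollary is correct and is essentially the paper's own: the first bracketed term in Proposition~\ref{prop:bgd} is nonnegative because $\w^t=\p_\Q(\w_f^t)$ minimizes $\|\cdot-\w_f^t\|^2$ over $\Q$ while $\w^{t+1}\in\Q$, and then taking $\eta$ small enough that $\rho/\eta\geq L+c$ yields (\ref{descent}). The paper presents exactly this reasoning in the text preceding the Corollary, so there is nothing to add.
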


\section{Experiments}\label{sec:exper}

We tested BCGD, as summarized in Algorithm \ref{alg}, on the CIFAR-10 \cite{cifar_09} and ImageNet \cite{imagenet_09,imagnet_12} color image datasets. We coded up the BCGD in PyTorch platform \cite{pytorch}. In all experiments, we fix the blending factor in (\ref{bcgd}) to be $\rho = 10^{-5}$. All runs with quantization are warm started with a float pre-trained model, and the resolutions $\bm{\alpha}$ are initialized by $\frac{1}{2^{b_a} - 1}$ of the maximal values in the corresponding feature maps generated by a random mini-batch. The learning rate for weight $\w$ starts from $0.01$. Rate factor for the learning rate of $\bm{\alpha}$ is $0.01$, i.e., the learning rate for $\bm{\alpha}$ starts from $10^{-4}$. The decay factor for the learning rates is $0.1$. The weights $\w$ and resolutions $\bm{\alpha}$ are updated jointly. In addition, we used momentum and batch normalization \cite{bn_15} to promote training efficiency. We mainly compare the performances of the proposed BCGD and the state-of-the-art BC (adapted for full quantization) on \emph{layer-wise} quantization. The experiments were carried out on machines with 4 Nvidia GeForce GTX 1080 Ti GPUs.

\begin{algorithm}
\caption{One iteration of BCGD for full quantization}\label{alg}
\textbf{Input}:  mini-batch loss function $f_t(\w,\bm{\alpha})$, blending parameter $\rho=10^{-5}$, learning rate $\eta_\w^t$ for the weights $\w$, learning rate $\eta_{\bm{\alpha}}^t$ for the resolutions $\bm{\alpha}$ of AQ (one component per activation layer). \\
\textbf{Do}:
\begin{algorithmic}
    \STATE Evaluate the mini-batch coarse gradient $(\tilde{\nabla}_\w f_t, \tilde{\nabla}_{\bm{\alpha}} f_t)$ at $(\w^t, \bm{\alpha}^t)$ according to section 2.
    \STATE $\w_f^{t+1} = (1-\rho)\w_f^t + \rho\w^t - \eta^t_\w \tilde{\nabla}_\w f_t(\w^t,\bm{\alpha}^t)$ \quad $//$ blended gradient update for weights
    \STATE $\bm{\alpha}^{t+1} = \bm{\alpha}^t - \eta^t_{\bm{\alpha}}  \tilde{\nabla}_{\bm{\alpha}} f_t(\w^t,\bm{\alpha}^t)$ \quad $//$ $\eta_{\bm{\alpha}}^t = 0.01\cdot \eta_\w^t$
    \STATE $\w^{t+1} = \p_\Q(\w_f^{t+1})$  \quad $//$ quantize the weights as per section 3.1
\end{algorithmic}
\end{algorithm}


The CIFAR-10 dataset consists of 60,000 $32\times32$ color images of 10 classes, with 6,000 images per class. There dataset is split into 50,000 training images and 10,000 test images. In the experiments, we used the testing images for validation. The mini-batch size was set to be $128$ and the models were trained for $200$ epochs with learning rate decaying at epoch 80 and 140. In addition, we used weight decay of $10^{-4}$ and momentum of $0.95$. The a.e derivative, 3-valued and 2-valued coarse derivatives of $\bm{\alpha}$ are compared on the VGG-11 \cite{vgg_14} and ResNet-20 \cite{resnet_15} architectures, and the results are listed in Tables \ref{tab:1}, \ref{tab:2} and \ref{tab:3}, respectively. It can be seen that the 3-valued coarse $\bm{\alpha}$ derivative gives the best overall performance in terms of accuracy. Figure \ref{fig:1} shows that in weight binarization, BCGD converges faster and better than BC. 

\begin{table}[ht]
\centering
\begin{tabular}{|c|c|c|c|c|c|c|c|}
  \hline			
 Network & Float & 32W4A & 1W4A & 2W4A & 4W4A  \\
  \hline
  VGG-11 + BC & \multirow{2}{*}{92.13} & \multirow{2}{*}{91.74} & 88.12 & 89.78 & {\bf  91.51} \\
  \cline{1-1}\cline{4-6}
  VGG-11+BCGD &  &  & {\bf 88.74} & {\bf 90.08} & 91.38 \\
  \hline
  ResNet-20 + BC & \multirow{2}{*}{92.41} & \multirow{2}{*}{91.90} & 89.23 & 90.89 & 91.53 \\
  \cline{1-1}\cline{4-6}
  ResNet-20+BCGD &  &  &  {\bf 90.10}  & {\bf 91.15}& 91.56 \\
  \hline
\end{tabular}
\caption{CIFAR-10 validation accuracies in \% with the a.e. $\bm{\alpha}$ derivative.}
\label{tab:1}
\end{table}

\begin{figure}[ht]
\centering
\begin{tabular}{cc}
\includegraphics[width=0.48\textwidth]{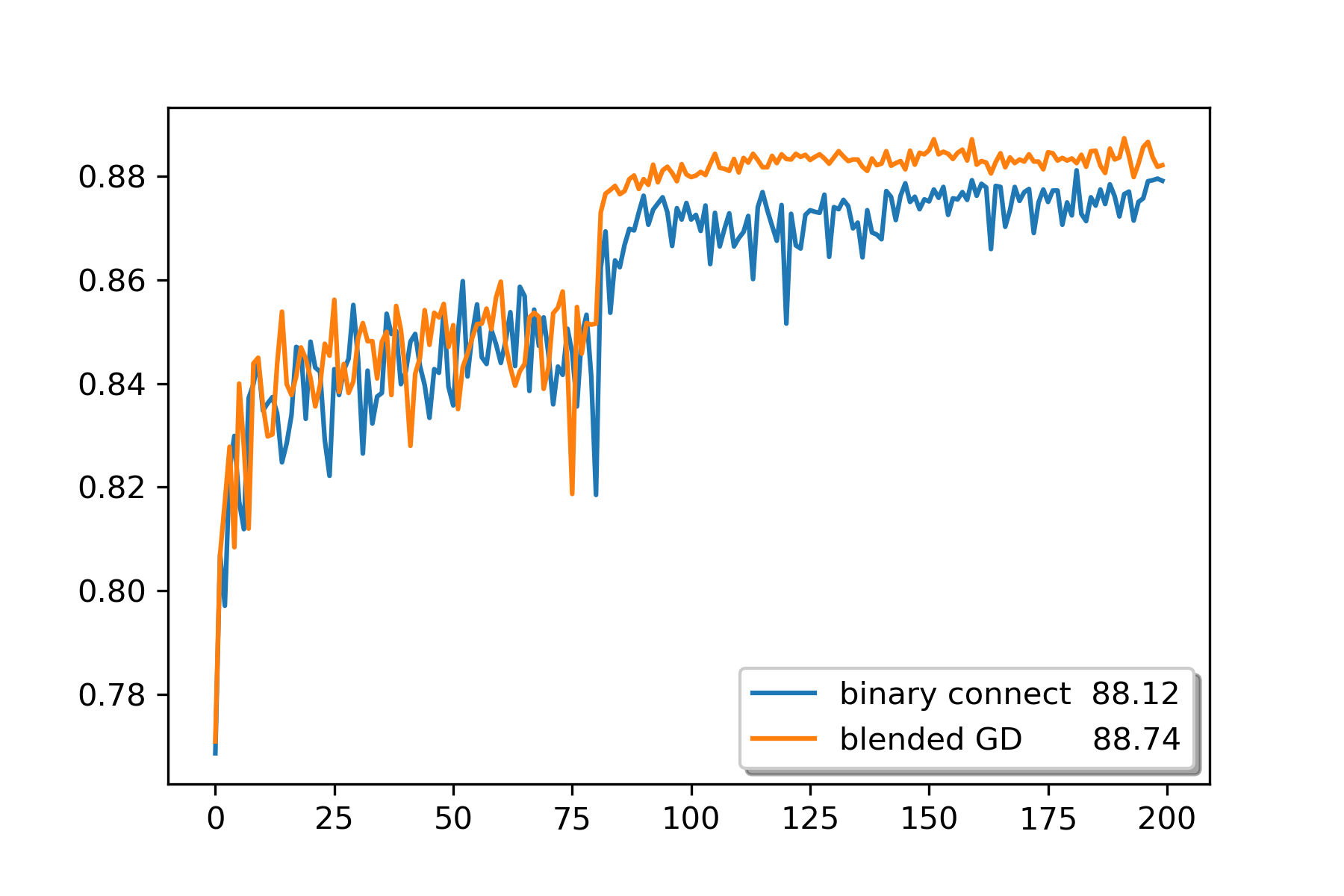}
\includegraphics[width=0.48\textwidth]{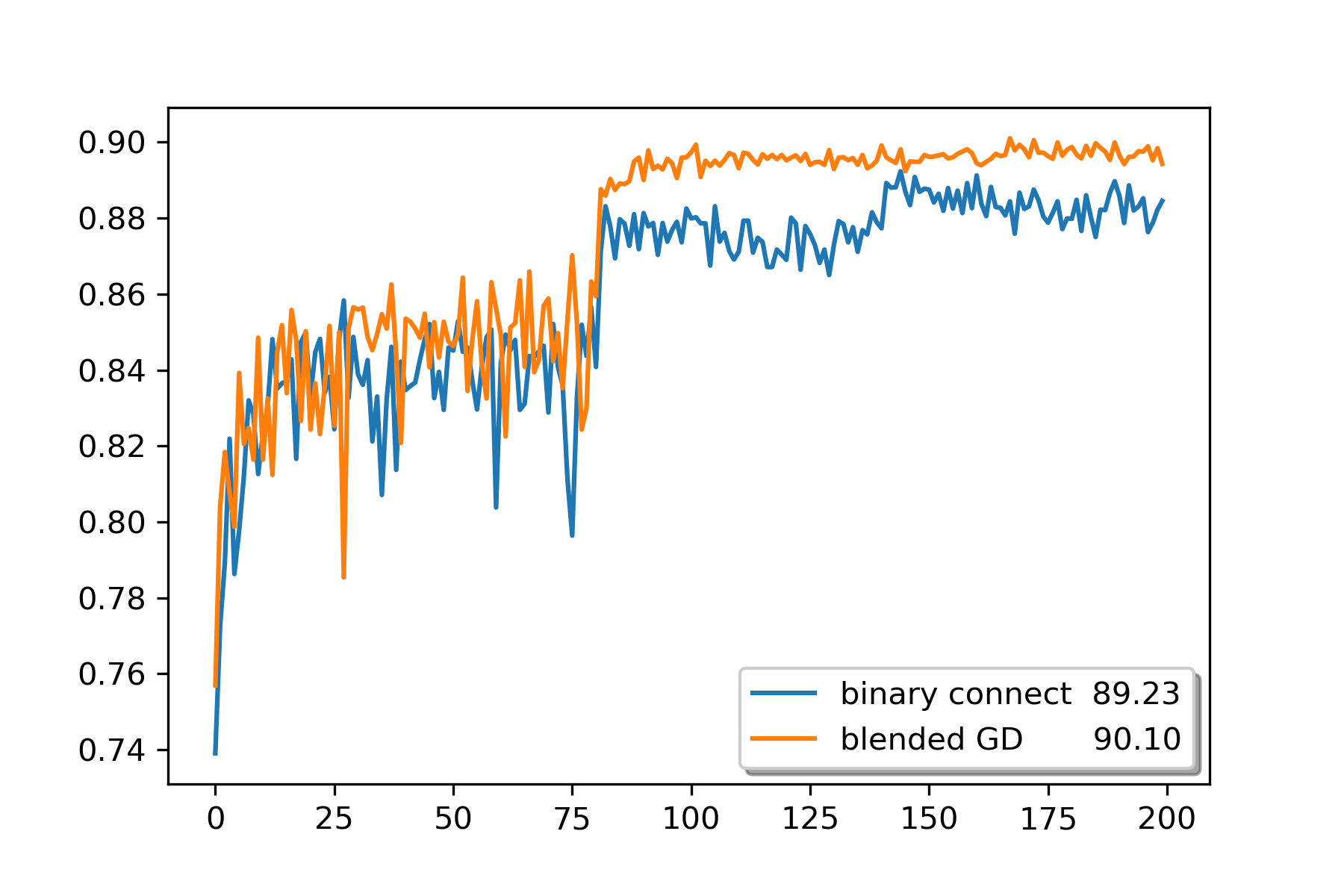}
\end{tabular}
\caption{CIFAR-10 validation accuracies vs. epoch numbers with a.e. $\bm{\alpha}$ derivative and 1W4A quantization on VGG-11 (left) and ResNet-20 (right), with (orange) and without (blue) blending which speeds up training towards higher accuracies.}\label{fig:1}
\end{figure}

\begin{table}[ht]
\centering
\begin{tabular}{|c|c|c|c|c|c|c|c|}
  \hline			
  Network & Float & 32W4A & 1W4A & 2W4A & 4W4A  \\
  \hline
  VGG-11 + BC & \multirow{2}{*}{92.13} & \multirow{2}{*}{92.08} & 89.12 & 90.52 & {\bf 91.89} \\
  \cline{1-1}\cline{4-6}
  VGG-11+BCGD &  &  & {\bf 89.59} & {\bf 90.71} & 91.70 \\
  \hline
  ResNet-20 + BC & \multirow{2}{*}{92.41} & \multirow{2}{*}{92.14} & 89.37 & 91.02 & 91.71 \\
  \cline{1-1}\cline{4-6}
  ResNet-20+BCGD &  &  & {\bf 90.05} & 91.03 & {\bf 91.97} \\
  \hline
\end{tabular}
\caption{CIFAR-10 validation accuracies with the 3-valued $\bm{\alpha}$ derivative.}
\label{tab:2}
\end{table}

\begin{table}[ht]
\centering
\begin{tabular}{|c|c|c|c|c|c|c|c|}
  \hline			
 Network & Float & 32W4A & 1W4A & 2W4A & 4W4A  \\
  \hline
  VGG-11 + BC & \multirow{2}{*}{92.13} & \multirow{2}{*}{91.66} & 88.50 & 89.99 & 91.31 \\
  \cline{1-1}\cline{4-6}
  VGG-11+BCGD &  &  & {\bf 89.12} & 90.00 & 91.31 \\
  \hline
  ResNet-20 + BC & \multirow{2}{*}{92.41} & \multirow{2}{*}{91.73} & 89.22 & 90.64 & 91.37 \\
  \cline{1-1}\cline{4-6}
  ResNet-20+BCGD &  &  & {\bf 89.98} & {\bf 90.75} & {\bf 91.65} \\
  \hline
\end{tabular}
\caption{CIFAR-10 validation accuracies with the 2-valued $\bm{\alpha}$ derivative (PACT \cite{pact}).}
\label{tab:3}
\end{table}

ImageNet (ILSVRC12) dataset \cite{imagenet_09} is a benchmark for large-scale image classification task, which has $1.2$ million images for training and $50,000$ for validation of 1,000 categories. We set mini-batch size to $256$ and trained the models for 80 epochs with learning rate decaying at epoch 50 and 70. The weight decay of $10^{-5}$ and momentum of $0.9$ were used. The ResNet-18 accuracies 65.46\%/86.36\% at 1W4A in Table 4 outperformed HWGQ \cite{halfwave_17} where top-1/top-5 accuracies are 60.8\%/83.4\% with non-quantized first/last convolutional layers. The results in the Table 4 and Table 5 show that using the 3-valued coarse $\bm{\alpha}$ partial derivative appears more effective than the 2-valued as quantization bit precision is lowered. We also observe that the accuracies degrade gracefully from 4W8A to 1W4A for ResNet-18 while quantizing all convolutional layers. Again, BCGD converges much faster than BC towards higher accuracy as illustrated by Figure \ref{fig:2}.

\begin{table}[ht]
\centering
\begin{tabular}{|c|c|c|c|c|c|c|c|}
  \hline			
 \multirow{2}{*}{} & \multirow{2}{*}{Float} & \multicolumn{2}{c|}{1W4A} & \multicolumn{2}{c|}{4W4A} & \multicolumn{2}{c|}{4W8A} \\
  \cline{3-8}
  & & 3 valued & 2 valued & 3 valued & 2 valued & 3 valued & 2 valued \\
  \hline
  top-1 & 69.64 & 64.36/$65.46^*$ & 63.37/$64.57^*$ & 67.36 & 66.97 & 68.85 & 68.83 \\
  \hline
  top-5 & 88.98 & 85.65/$86.36^*$ & 84.93/$85.75^*$ & 87.76 & 87.41 & 88.71 & 88.84\\
  \hline
\end{tabular}
\caption{ImageNet validation accuracies with BCGD on ResNet-18. Starred accuracies are with first and last convolutional layers in float precision as in \cite{halfwave_17}. The accuracies are for quantized weights across all layers otherwise.}
\label{tab:4}
\end{table}

\begin{table}[ht]
\centering
\begin{tabular}{|c|c|c|c|c|c|c|c|}
  \hline			
 \multirow{2}{*}{} & \multirow{2}{*}{Float} & \multicolumn{2}{c|}{1W4A} & \multicolumn{2}{c|}{4W4A} & \multicolumn{2}{c|}{4W8A} \\
  \cline{3-8}
  & & 3 valued & 2 valued & 3 valued & 2 valued & 3 valued & 2 valued \\
  \hline
  top-1 & 73.27 & 68.43 & 67.51 & 70.81 & 70.01 & 72.07 & 72.18 \\
  \hline
  top-5 & 91.43 & 88.29 & 87.72 & 90.00 & 89.49 & 90.71 & 90.73\\
  \hline
\end{tabular}
\caption{ImageNet validation accuracies with BCGD on ResNet-34. The accuracies are for quantized weights across all layers.}
\label{tab:5}
\end{table}

\begin{figure}[H]
\centering
\begin{tabular}{cc}
\includegraphics[width=0.48\textwidth]{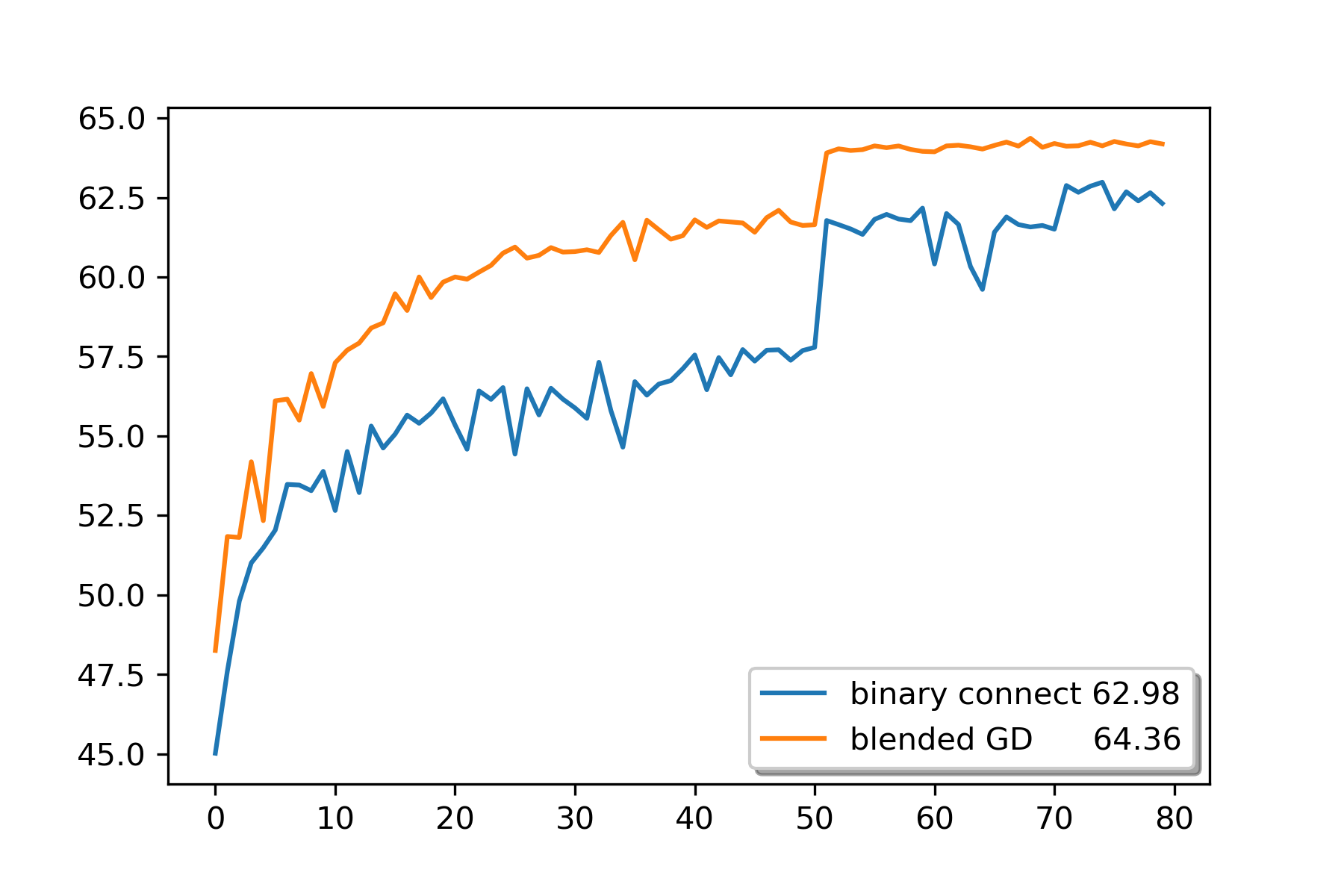}
\includegraphics[width=0.48\textwidth]{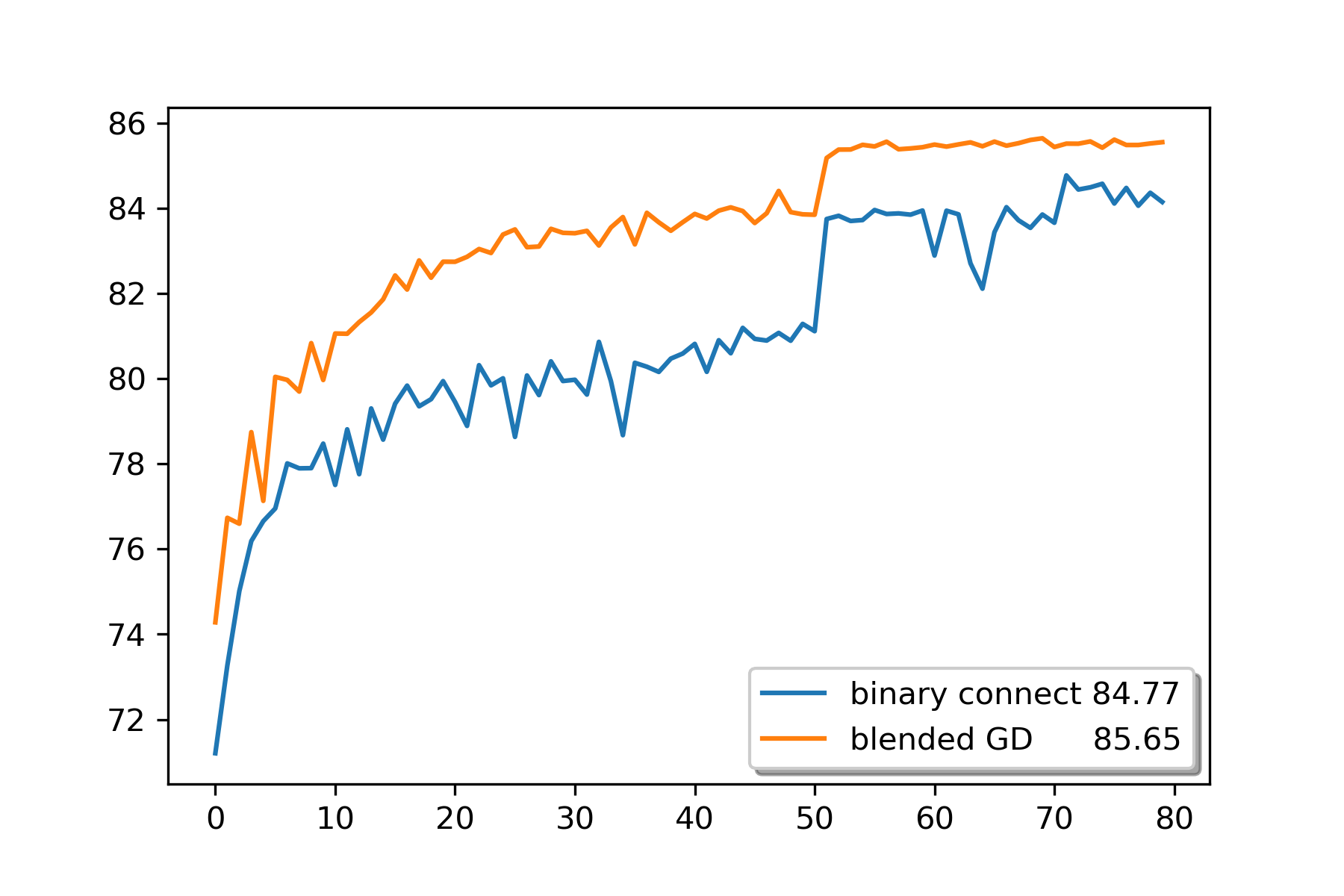}
\end{tabular}
\caption{ImageNet validation accuracies (left: top-1, right: top-5) vs. number of epochs with 3-valued derivative on 1W4A quantization on ResNet-18 with (orange) and without (blue) blending which substantially speeds up training towards higher accuracies.}\label{fig:2}
\end{figure}

\medskip

\section{Analysis of Coarse Gradient Descent for Activation Quantization}
\label{sec:analy}
As a proof of concept, we analyze a simple two-layer network with binarized ReLU activation. Let $\sigma$ be the binarized ReLU function, same as hard threshold activation \cite{hinton2012neural}, with the bit-width $b_a=1$ and the resolution $\alpha \equiv 1$ in (\ref{eq:qrelu}):
\begin{equation*}
\sigma(x) = 
\begin{cases}
0 \quad \mbox{if } x \leq 0, \\
1 \quad \mbox{if } x >0 .
\end{cases}
\end{equation*}
We define the training sample loss by
$$
\ell(\v,\w; \Z): = \frac{1}{2}\Big(\v^\t \sigma(\Z\w) - (\v^*)^\t \sigma(\Z\w^*)\Big)^2,
$$
where $\v^*\in\R^m$ and $\w^*\in\R^n$ are the underlying (nonzero) teacher parameters in the second and first layers, respectively. Same as in the literature that analyze the conventional ReLU nets \cite{Lee,li2017convergence,tian2017analytical,brutzkus2017globally}, we assume the entries of $\Z\in\R^{m\times n}$ are i.i.d. sampled from the standard normal distribution $\mathcal{N}(0,1)$. Note that $\ell(\v,\w;\Z) = \ell(\v,\w/c; \Z)$ for any scalar $c>0$. Without loss of generality, we fix $\|\w^*\| = 1$.

\subsection{Population Loss Minimization}
Suppose we have $N$ independent training samples $\{\Z^{(1)},\dots, \Z^{(N)}\}$, then the associated empirical risk minimization reads
\begin{equation}\label{eq:mean_model}
\min_{\v\in\R^m,\w\in\R^n} \; \frac{1}{N}\sum_{i=1}^N \ell(\v,\w; \Z^{(i)}).
\end{equation}
The major difficulty of analysis here is that the empirical risk function in (\ref{eq:mean_model}) is still piecewise constant and has a.e. zero partial $\w$ gradient. This issue can be resolved by instead considering the following population loss minimization \cite{li2017convergence,brutzkus2017globally,Lee,tian2017analytical}:
\begin{equation}\label{eq:model}
\min_{\v\in\R^m,\w\in\R^n} \; f(\v, \w) := \E_\Z \left[\ell(\v,\w; \Z)\right].
\end{equation}
Specifically, in the limit $N\to\infty$, the objective function $f$ becomes favorably smooth with non-trivial gradient. 
For nonzero vector $\w$, let us define the angle between $\w$ and $\w^*$ by 
$$
\theta(\w,\w^*) := \arccos\Big(\frac{\w^\t\w^*}{\|\w\|\|\w^*\|}\Big) = \arccos\Big(\frac{\w^\t\w^*}{\|\w\|}\Big),
$$
then we have 
\begin{lem}\label{lem:obj}
If every entry of $\Z$ is i.i.d. sampled from $\mathcal{N}(0,1)$, $\|\w^*\|=1$, and $\|\w\|\neq 0$, then the population loss is
\begin{equation}\label{eq:loss}
f(\v,\w) = \frac{1}{8}\left[\v^\t\big(\I + \1\1^\t \big) \v -2\v^\t \left( \left(1-\frac{2}{\pi}\theta(\w,\w^*) \right)\I + \1\1^\t \right)\v^*  + (\v^*)^\t \big(\I + \1\1^\t \big)\v^* \right].
\end{equation}
Moreover, the gradients of $f(\v,\w)$ w.r.t. $\v$ and $\w$ are
\begin{equation}\label{eq:grad_v}
\frac{\partial f}{\partial \v}(\v,\w) = \frac{1}{4}\big(\I + \1\1^\t \big) \v - \frac{1}{4}\left( \left(1-\frac{2}{\pi}\theta(\w,\w^*) \right)\I + \1\1^\t \right)\v^* 
\end{equation}
and
\begin{equation}\label{eq:grad_w}
\frac{\partial f}{\partial \w}(\v,\w) = -\frac{\v^\t\v^*}{2\pi\|\w\|}
\frac{\Big(\I - \frac{\w\w^\t}{\|\w\|^2}\Big)\w^*}{\Big\| \Big(\I - \frac{\w\w^\t}{\|\w\|^2}\Big)\w^*\Big\|}, \quad \mbox{for } \theta(\w,\w^*)\in(0, \pi), 
\end{equation}
respectively.
\end{lem}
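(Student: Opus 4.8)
The plan is to compute the population loss $f(\v,\w)$ in closed form first, and then read off the two gradient formulas by differentiating that closed form directly; this avoids any differentiation-under-the-integral subtlety, since $\ell(\v,\w;\Z)$ is itself non-differentiable in $\w$ for almost every fixed $\Z$, whereas its expectation is smooth.

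\textbf{Step 1 (reduce to row-wise Gaussian statistics).} I would expand the square to write $\ell(\v,\w;\Z) = \tfrac12\big[(\v^\t\sigma(\Z\w))^2 - 2(\v^\t\sigma(\Z\w))((\v^*)^\t\sigma(\Z\w^*)) + ((\v^*)^\t\sigma(\Z\w^*))^2\big]$, so that $f = \E_\Z[\ell]$ is determined by the three $m\times m$ matrices $\E[\sigma(\Z\w)\sigma(\Z\w)^\t]$, $\E[\sigma(\Z\w)\sigma(\Z\w^*)^\t]$, and $\E[\sigma(\Z\w^*)\sigma(\Z\w^*)^\t]$. Because $\sigma$ acts entrywise and the rows $\z_i^\t$ of $\Z$ are i.i.d.\ $\mathcal{N}(\0,\I_n)$, the $(i,j)$-entry of each of these matrices depends only on whether $i=j$.

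\textbf{Step 2 (entrywise probabilities and assembly).} For one row $\z\sim\mathcal{N}(\0,\I_n)$ and any nonzero $\u$, $\sigma(\z^\t\u)$ is Bernoulli$(1/2)$ and $\sigma^2=\sigma$, giving diagonal entry $1/2$ for the two pure terms; for $i\ne j$ the rows are independent, giving every off-diagonal entry $1/4$. The only nontrivial quantity is the diagonal entry of the cross term, $\P[\z^\t\w>0,\ \z^\t\w^*>0]$: the pair $(\z^\t\w/\|\w\|,\ \z^\t\w^*)$ is bivariate Gaussian with unit variances and correlation $\cos\theta(\w,\w^*)$, so the orthant identity $\P[X>0,Y>0]=\tfrac14+\tfrac{1}{2\pi}\arcsin\rho$ with $\rho=\cos\theta$ and $\arcsin(\cos\theta)=\tfrac\pi2-\theta$ gives $\tfrac12-\tfrac{\theta}{2\pi}$. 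Hence $\E[\sigma(\Z\w)\sigma(\Z\w)^\t]=\E[\sigma(\Z\w^*)\sigma(\Z\w^*)^\t]=\tfrac14(\I+\1\1^\t)$ and $\E[\sigma(\Z\w)\sigma(\Z\w^*)^\t]=\tfrac14\big((1-\tfrac2\pi\theta)\I+\1\1^\t\big)$; substituting into the expansion yields $(\ref{eq:loss})$.

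\textbf{Step 3 (gradients).} Since $f$ is quadratic in $\v$ with symmetric coefficient matrices, $(\ref{eq:grad_v})$ is immediate. For $(\ref{eq:grad_w})$, only the middle term of $(\ref{eq:loss})$ depends on $\w$, and its $\w$-dependent part equals $\tfrac{1}{2\pi}(\v^\t\v^*)\,\theta(\w,\w^*)$, so $\tfrac{\partial f}{\partial\w}=\tfrac{\v^\t\v^*}{2\pi}\nabla_\w\theta(\w,\w^*)$. Writing $\theta=\arccos(g(\w))$ with $g(\w)=\w^\t\w^*/\|\w\|$, the chain rule gives $\nabla_\w\theta=-\tfrac{1}{\sqrt{1-g^2}}\nabla g$ with $\nabla g=\tfrac{1}{\|\w\|}(\I-\tfrac{\w\w^\t}{\|\w\|^2})\w^*$; using $\|\w^*\|=1$ one has $\sqrt{1-g^2}=\sqrt{1-\cos^2\theta}=\sin\theta=\|(\I-\tfrac{\w\w^\t}{\|\w\|^2})\w^*\|$, and the $\sin\theta$ factors cancel against the norm of $\nabla g$ up to its normalization, giving $(\ref{eq:grad_w})$.

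\textbf{Main obstacle.} The computation is essentially routine once the bivariate-Gaussian orthant-probability identity is in hand; the one step that needs care is the final chain rule — recognizing that $\sqrt{1-\cos^2\theta}$ is exactly the norm of the component of $\w^*$ orthogonal to $\w$, which is what puts the answer into the normalized-projection form and also explains the restriction $\theta(\w,\w^*)\in(0,\pi)$ (at $\theta=0$ or $\pi$ that projection vanishes and $\theta$ ceases to be differentiable in $\w$).
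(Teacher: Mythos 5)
Your proposal is correct and follows essentially the same route as the paper's proof: reduce $f$ to the three second-moment matrices $\E[\sigma(\Z\w)\sigma(\Z\w)^\t]$, $\E[\sigma(\Z\w)\sigma(\Z\w^*)^\t]$, $\E[\sigma(\Z\w^*)\sigma(\Z\w^*)^\t]$, compute their entries row-by-row, and then differentiate the resulting closed form, with $(\ref{eq:grad_w})$ obtained by the chain rule through $\arccos(\w^\t\w^*/\|\w\|)$. The only cosmetic difference is that you justify $\P[\z^\t\w>0,\,\z^\t\w^*>0]=\tfrac{\pi-\theta}{2\pi}$ via the bivariate-normal orthant formula, whereas the paper uses a rotation-invariance/geometric argument; the two are equivalent.
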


\medskip

When $\w \neq \0$, the possible (local) minimizers of problem (\ref{eq:model}) are located at
\begin{enumerate}
\item Stationary points where the gradients defined in (\ref{eq:grad_v}) and (\ref{eq:grad_w}) vanish simultaneously (which may not be possible), i.e., 
\begin{equation}\label{eq:critical}
\v^\t\v^* = 0 \mbox{ and } \v = \big(\I + \1\1^\t \big)^{-1}\left( \left(1-\frac{2}{\pi}\theta(\w,\w^*) \right)\I + \1\1^\t \right)\v^* .
\end{equation}
\item Non-differentiable points where $\theta(\w,\w^*) = 0$ and $\v = \v^*$, or $\theta(\w,\w^*) = \pi$ and $\v = \big(\I + \1\1^\t \big)^{-1}( \1\1^\t -\I )\v^*$. 
\end{enumerate}
Among them, $\{(\v,\w): \v = \v^*, \, \theta(\w,\w^*) = 0\}$ are the global minimizers with $f(\v,\w) = 0$.

\begin{prop}\label{prop}
If $(\1^\t\v^*)^2 < \frac{m+1}{2}\|\v^*\|^2$, then 
\begin{align*}
\bigg\{(\v,\w) \in\R^{m+n}:  \v = (\I + \1\1^\t)^{-1} & \left(\frac{-(\1^\t\v^*)^2}{(m+1)\|\v^*\|^2- (\1^\t\v^*)^2}\I + \1\1^\t\right)\v^*, \\
& \qquad \qquad \qquad \theta(\w,\w^*) = \frac{\pi}{2}\frac{(m+1)\|\v^*\|^2}{(m+1)\|\v^*\|^2 - (\1^\t\v^*)^2} \bigg\}
\end{align*}
gives the stationary points obeying (\ref{eq:critical}). Otherwise, problem (\ref{eq:model}) has no stationary points.  
\end{prop}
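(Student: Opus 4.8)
The plan is to translate the stationarity conditions (\ref{eq:critical}) into explicit formulas and then decide for which $\v^*$ they are realizable by an angle $\theta:=\theta(\w,\w^*)$ in the differentiable range $(0,\pi)$. First, observe that on $\theta\in(0,\pi)$ the vector $\big(\I-\tfrac{\w\w^\t}{\|\w\|^2}\big)\w^*$ (the component of $\w^*$ orthogonal to $\w$) is nonzero, so (\ref{eq:grad_w}) vanishes exactly when $\v^\t\v^*=0$, which is the first equation in (\ref{eq:critical}); the second equation is just $\tfrac{\partial f}{\partial\v}=0$ from (\ref{eq:grad_v}). Writing $t:=1-\tfrac{2}{\pi}\theta$ and using the Sherman--Morrison identity $(\I+\1\1^\t)^{-1}=\I-\tfrac{1}{m+1}\1\1^\t$ (here $\1\in\R^m$), the second equation simplifies to $\v=t\,\v^*+\tfrac{1-t}{m+1}(\1^\t\v^*)\,\1$.

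Next I would impose $\v^\t\v^*=0$ on this expression, which collapses to the scalar equation $t\,(m+1)\|\v^*\|^2+(1-t)(\1^\t\v^*)^2=0$. Since $\v^*\ne\0$, Cauchy--Schwarz gives $(\1^\t\v^*)^2\le m\|\v^*\|^2<(m+1)\|\v^*\|^2$, so the coefficient of $t$ is strictly positive and the equation has the unique solution $t=\tfrac{-(\1^\t\v^*)^2}{(m+1)\|\v^*\|^2-(\1^\t\v^*)^2}$; substituting this back reproduces the asserted formula for $\v$, and $\theta=\tfrac{\pi}{2}(1-t)$ gives the asserted formula for $\theta(\w,\w^*)$.

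Finally I would verify feasibility. Because $t\le 0$, we always have $\theta=\tfrac{\pi}{2}(1-t)\ge\tfrac{\pi}{2}>0$, and a one-line rearrangement shows $\theta<\pi$ iff $2(\1^\t\v^*)^2<(m+1)\|\v^*\|^2$, i.e.\ exactly the hypothesis $(\1^\t\v^*)^2<\tfrac{m+1}{2}\|\v^*\|^2$. Under this hypothesis, every $\w$ whose angle with $\w^*$ equals the prescribed value, paired with the prescribed $\v$, satisfies (\ref{eq:critical}), which yields the stated set of stationary points; if the hypothesis fails, the required angle is $\ge\pi$, lying outside the range where $f$ is differentiable in $\w$, so no stationary point of the form (\ref{eq:critical}) exists. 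The argument is essentially bookkeeping in linear algebra; the only points demanding care are invoking $\v^*\ne\0$ together with Cauchy--Schwarz so that the linear equation for $t$ is nondegenerate with a finite root, and translating the angle constraint $\theta\in(0,\pi)$ faithfully into the stated inequality — I do not expect a deeper obstacle.
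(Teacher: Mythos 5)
Your proposal is correct and follows essentially the same route as the paper: both impose the orthogonality condition $\v^\t\v^*=0$ on the explicit expression for $\v$ coming from $\frac{\partial f}{\partial \v}=0$, reduce it to a single linear equation in $\theta$ (your $t=1-\frac{2}{\pi}\theta$), and then observe that the resulting angle lies in $(0,\pi)$ precisely when $(\1^\t\v^*)^2<\frac{m+1}{2}\|\v^*\|^2$. The only difference is bookkeeping — you invert $\I+\1\1^\t$ explicitly via Sherman--Morrison, while the paper manipulates the quadratic forms $(\v^*)^\t(\I+\1\1^\t)^{-1}\v^*$ directly — and your explicit Cauchy--Schwarz check that $(m+1)\|\v^*\|^2-(\1^\t\v^*)^2>0$ is a nice touch the paper leaves implicit.
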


The gradient of the population loss, $\left(\frac{\partial f}{\partial \v}, \, \frac{\partial f}{\partial \w}\right)(\v,\w)$, holds Lipschitz continuity under a boundedness condition.

\begin{lem}\label{lem:lipschitz}
For any $(\v,\w)$ and $(\tv,\tw)$ with $\min\{ \|\w\|, \,  \|\tw\|\} = c>0$ and $\max\{\|\v\|, \, \|\tv\|\} = C$, there exists a constant $L>0$ depending on $c$ and $C$, such that 
$$
\left\|\left( \frac{\partial f}{\partial \v}, \frac{\partial f}{\partial \w} \right)(\v,\w) - \left(\frac{\partial f}{\partial \v}, \frac{\partial f}{\partial \w} \right)(\tv,\tw) \right\| \leq L \|(\v, \w) - (\tv, \tw)\|.
$$
\end{lem}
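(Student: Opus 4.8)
The plan is to estimate the $\v$-block and the $\w$-block of the gradient separately, starting from the closed forms in Lemma~\ref{lem:obj}, and then to combine the two bounds. For the $\v$-block, \eqref{eq:grad_v} exhibits $\frac{\partial f}{\partial\v}(\v,\w)$ as an affine map of $\v$ with the \emph{constant} coefficient matrix $\frac14(\I+\1\1^\t)$ (of spectral norm $\frac{m+1}{4}$), plus the only $\w$-dependent term $\frac{1}{2\pi}\,\theta(\w,\w^*)\,\v^*$. Hence
\[
\Big\|\tfrac{\partial f}{\partial\v}(\v,\w)-\tfrac{\partial f}{\partial\v}(\tv,\tw)\Big\|\le \tfrac{m+1}{4}\,\|\v-\tv\|+\tfrac{\|\v^*\|}{2\pi}\,\big|\theta(\w,\w^*)-\theta(\tw,\w^*)\big|,
\]
and it remains to show that $\w\mapsto\theta(\w,\w^*)$ is Lipschitz on $\{\w:\|\w\|\ge c\}$. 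For that, one uses that $\theta(\w,\w^*)$ depends on $\w$ only through $\hat\w:=\w/\|\w\|$ and equals the geodesic distance on the unit sphere between $\hat\w$ and $\w^*$; the triangle inequality for that metric together with $2\arcsin(t/2)\le\tfrac{\pi}{2}t$ gives $|\theta(\w,\w^*)-\theta(\tw,\w^*)|\le\tfrac{\pi}{2}\|\hat\w-\hat{\tw}\|$, and the elementary estimate $\|\hat\w-\hat{\tw}\|\le\tfrac{2}{\max\{\|\w\|,\|\tw\|\}}\|\w-\tw\|\le\tfrac{2}{c}\|\w-\tw\|$ then yields $|\theta(\w,\w^*)-\theta(\tw,\w^*)|\le\tfrac{\pi}{c}\|\w-\tw\|$. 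This closes the $\v$-block.

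For the $\w$-block I would write $\tfrac{\partial f}{\partial\w}(\v,\w)=\tfrac{\v^\t\v^*}{2\pi}\,\g(\w)$ with $\g(\w):=-\tfrac{1}{\|\w\|}\,\tfrac{P^\perp_\w\w^*}{\|P^\perp_\w\w^*\|}$ and $P^\perp_\w:=\I-\tfrac{\w\w^\t}{\|\w\|^2}$, so that $\|\g(\w)\|=1/\|\w\|\le 1/c$. Using $|\v^\t\v^*-\tv^\t\v^*|\le\|\v^*\|\,\|\v-\tv\|$ and $|\tv^\t\v^*|\le C\|\v^*\|$,
\[
\Big\|\tfrac{\partial f}{\partial\w}(\v,\w)-\tfrac{\partial f}{\partial\w}(\tv,\tw)\Big\|\le\tfrac{\|\v^*\|}{2\pi c}\,\|\v-\tv\|+\tfrac{C\|\v^*\|}{2\pi}\,\|\g(\w)-\g(\tw)\|,
\]
so the whole matter reduces to a Lipschitz bound on $\g$.

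The Lipschitz estimate for $\g$ is where the work lies, and I expect it to be the main obstacle. Since $\|P^\perp_\w\w^*\|=\sin\theta(\w,\w^*)$, the map $\g$ is real-analytic away from the two directions $\w\parallel\pm\w^*$, but it is genuinely \emph{discontinuous} there: as $\theta(\w,\w^*)\to0$ along different directions orthogonal to $\w^*$, $\g$ tends to different unit vectors. Thus no constant $L=L(c,C)$ can work on all of $\{\|\w\|\ge c\}$, and the estimate must be localized to a set on which, in addition, $\sin\theta(\w,\w^*)\ge s_0$ for some fixed $s_0>0$ --- precisely the regime in which the coarse gradient descent trajectory is kept throughout the convergence analysis. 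On $\mathcal R:=\{\w:\|\w\|\ge c,\ \sin\theta(\w,\w^*)\ge s_0\}$ one differentiates the quotient defining $\g$ explicitly; every denominator that appears is a power of $\|\w\|$ (at least $c$) or of $\|P^\perp_\w\w^*\|=\sin\theta$ (at least $s_0$), with $\|\w^*\|=1$ fixed, hence is bounded away from $0$, so $\sup_{\mathcal R}\|\nabla\g\|\le L_0(c,s_0)<\infty$. Because $\mathcal R$ is not convex, one then bounds $\|\g(\w)-\g(\tw)\|$ by integrating $\nabla\g$ along a path in $\mathcal R$ joining $\w$ and $\tw$ of length $\le\kappa\|\w-\tw\|$ (equivalently, one exploits the smallness of a single descent step). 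Combining this with the $\v$-block bound and $\|\v-\tv\|+\|\w-\tw\|\le\sqrt{2}\,\|(\v,\w)-(\tv,\tw)\|$ gives the assertion, with $L$ depending on $c$ and $C$ (and, harmlessly, on the fixed $\v^*,\w^*$, the dimensions $m,n$, and the margin $s_0$). In brief: the explicit differentiation of $\g$ and the tracking of constants are routine; the real content is recognizing that the $\w$-gradient is Lipschitz only once the angle $\theta(\w,\w^*)$ is kept away from $0$ and $\pi$, which is why the lemma is effectively used on such a region.
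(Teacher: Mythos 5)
Your treatment of the $\partial f/\partial\v$ block coincides with the paper's: both reduce it to the Lipschitz continuity of $\w\mapsto\theta(\w,\w^*)$ on $\{\|\w\|\geq c\}$ (the paper's Lemma~\ref{lem:angle}.1 gives the constant $\tfrac{\pi}{2c}$ where you get $\tfrac{\pi}{c}$; immaterial). The divergence is in the $\partial f/\partial\w$ block. The paper does \emph{not} localize: its Lemma~\ref{lem:angle}.2 asserts the global bound
$$
\left\|\frac{1}{\|\w\|}\frac{\big(\I-\frac{\w\w^\t}{\|\w\|^2}\big)\w^*}{\big\|\big(\I-\frac{\w\w^\t}{\|\w\|^2}\big)\w^*\big\|}-\frac{1}{\|\tw\|}\frac{\big(\I-\frac{\tw\tw^\t}{\|\tw\|^2}\big)\w^*}{\big\|\big(\I-\frac{\tw\tw^\t}{\|\tw\|^2}\big)\w^*\big\|}\right\|\leq\frac{1}{c^2}\|\w-\tw\|,
$$
deduced from the claim that the angle between the two projections $\big(\I-\frac{\w\w^\t}{\|\w\|^2}\big)\w^*$ and $\big(\I-\frac{\tw\tw^\t}{\|\tw\|^2}\big)\w^*$ equals the angle between $\w$ and $\tw$. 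That claim is false in general (take $\w,\tw$ both orthogonal to $\w^*$: both projections equal $\w^*$, while $\theta(\w,\tw)$ is arbitrary), and your discontinuity observation shows the displayed bound itself fails: for $\w=\w^*+\epsilon\bm{u}$ and $\tw=\w^*-\epsilon\bm{u}$ with $\bm{u}\perp\w^*$ a unit vector, the left-hand side tends to $2$ while $\|\w-\tw\|=2\epsilon\to0$. Since $\|\partial f/\partial\w\|=|\v^\t\v^*|/(2\pi\|\w\|)$ is independent of $\theta$ while its direction reverses across $\w\parallel\pm\w^*$, no $L=L(c,C)$ can work once $\v^\t\v^*\neq0$ is allowed. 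So your diagnosis is correct and exposes a genuine defect in the lemma as stated and in the paper's proof of it, rather than a weakness of your own route.

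That said, what you actually prove is a different, weaker statement: Lipschitz continuity only on $\mathcal R=\{\|\w\|\geq c,\ \sin\theta(\w,\w^*)\geq s_0\}$ with $L=L(c,C,s_0)$. Two points remain open in your write-up. First, the concluding step is only sketched, and it is exactly where the localization does work: for $\w,\tw\in\mathcal R$ with $\|\w-\tw\|$ small compared to $c\,s_0$ the straight segment stays in a slightly enlarged $\mathcal R$ (so integrating $\nabla\g$ is legitimate), while for $\|\w-\tw\|\gtrsim c\,s_0$ one should instead invoke the trivial bound $\|\g(\w)-\g(\tw)\|\leq 2/c$; alternatively, estimate $\g(\w)-\g(\tw)$ directly by adding and subtracting, every denominator being at least $c$ or $s_0$. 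Either way the constant must be exhibited. Second, the localized lemma does not slot directly into the paper's Theorem~\ref{thm}: there the Lipschitz bound is applied along the entire segment from $(\v^t,\w^t)$ to $(\v^{t+1},\w^{t+1/2})$, which can pass arbitrarily close to $\theta\in\{0,\pi\}$ (the paper excises only the single parameter value $a_0$), so one would additionally need a uniform angular margin for the iterates, which neither you nor the paper establishes. In short: your proposal correctly identifies why the stated lemma cannot be proved and supplies a sound repair of the statement, but it is a repair, not a proof of the lemma as written.
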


\subsection{Convergence Analysis of Normalized Coarse Gradient Descent}
The partial gradients $\frac{\partial f}{\partial \v}$ and $\frac{\partial f}{\partial \w}$, however, are not available in the training. What we really have access to are the expectations of the sample gradients, namely, 
$$\E_\Z\left[\frac{\partial \ell}{\partial \v}(\v,\w;\Z)\right] \mbox{ and } \E_\Z\left[\frac{\partial \ell}{\partial \w}(\v,\w;\Z)\right]. $$
If $\sigma$ was differentiable, then the back-propagation reads
\begin{equation}\label{eq:bp_v}
\frac{\partial \ell}{\partial \v}(\v,\w;\Z) = \sigma(\Z\w)\Big(\v^\t \sigma(\Z\w) - (\v^*)^\t \sigma(\Z\w^*)\Big).
\end{equation}
and
\begin{equation}\label{eq:bp_w}
\frac{\partial \ell}{\partial \w}(\v,\w;\Z) = \Z^\t\big(\sigma^{\prime}(\Z\w)\odot\v\big)\Big(\v^\t \sigma(\Z\w) - (\v^*)^\t \sigma(\Z\w^*)\Big).
\end{equation}
Now that $\sigma$ has zero derivative a.e., which makes (\ref{eq:bp_w}) inapplicable. We study the coarse gradient descent with $\sigma^{\prime}$ in (\ref{eq:bp_w}) being replaced by the (sub)derivative $\mu^{\prime}$ of regular ReLU $\mu(x):= \max(x,0)$. More precisely, we use the following surrogate of $\frac{\partial \ell}{\partial \w}(\v,\w;\Z)$:
\begin{equation}\label{eq:cbp_w}
\g(\v,\w;\Z) = \Z^\t\big(\mu^{\prime}(\Z\w)\odot\v\big)\Big(\v^\t \sigma(\Z\w) - (\v^*)^\t \sigma(\Z\w^*)\Big)
\end{equation}
with $\mu'(x) = \sigma(x)$, and consider the following coarse gradient descent with weight normalization:
\begin{equation}\label{eq:iter}
\begin{cases}
\v^{t+1} = \v^t - \eta \E_\Z  \left[\frac{\partial \ell}{\partial \v}(\v^t,\w^t;\Z)\right] \\
\w^{t+\frac{1}{2}} = \w^t - \eta \E_\Z \left[\g(\v^t,\w^t; \Z)\right] \\
\w^{t+1} = \frac{\w^{t+1/2}}{\left\|\w^{t+1/2}\right\|}
\end{cases}
\end{equation}

\medskip

\begin{lem} \label{lem:cgd}
The expected gradient of $\ell(\v,\w;\Z)$ w.r.t. $\v$ is
\begin{equation}\label{eq:cgrad_v}
\E_\Z  \left[\frac{\partial \ell}{\partial \v}(\v,\w;\Z)\right] = \frac{\partial f}{\partial \v}(\v,\w) = \frac{1}{4}\big(\I + \1\1^\t \big) \v - \frac{1}{4}\left( \left(1-\frac{2}{\pi}\theta(\w,\w^*) \right)\I + \1\1^\t \right)\v^* .
\end{equation}
The expected coarse gradient w.r.t. $\w$ is 
\begin{equation}\label{eq:cgrad_w}
\E_\Z \Big[\g(\v,\w; \Z)\Big] = 
\frac{h(\v,\v^*)}{2\sqrt{2\pi}}\frac{\w}{\|\w\|} - \cos\left(\frac{\theta(\w,\w^*)}{2}\right)\frac{\v^\t\v^*}{\sqrt{2\pi}}\frac{\frac{\w}{\|\w\|} + \w^*}{\left\|\frac{\w}{\|\w\|} + \w^*\right\|},\footnote{We redefine the second term as $\0$ in the case $\theta(\w,\w^*) = \pi$.}  
\end{equation}
where $h(\v,\v^*) = \|\v\|^2+ (\1^\t\v)^2 - (\1^\t\v)(\1^\t\v^*) + \v^\t\v^*$. In particular, $\E_\Z \Big[\frac{\partial \ell}{\partial \v}(\v,\w; \Z)\Big]$ and $\E_\Z \Big[\g(\v,\w; \Z)\Big]$ vanish simultaneously only in one of the following cases
\begin{enumerate}
\item (\ref{eq:critical}) is satisfied according to Proposition \ref{prop}.
\item $\v = \v^*$, $\theta(\w,\w^*)=0$, or $\v = (\I + \1\1^\t)^{-1}(\1\1^\t - \I)\v^*$, $\theta(\w,\w^*)=\pi$.
\end{enumerate}
\end{lem}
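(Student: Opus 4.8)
The plan is to compute the two expectations explicitly using the rotational invariance of the Gaussian, and then to characterize the common zero set. First I would handle $\E_\Z\left[\frac{\partial\ell}{\partial\v}\right]$: starting from \eqref{eq:bp_v}, the expectation reduces to computing $\E_\Z[\sigma(\Z\w)\sigma(\Z\w)^\t]$ and $\E_\Z[\sigma(\Z\w)\sigma(\Z\w^*)^\t]$. Each row of $\Z$ is an i.i.d. standard Gaussian vector, so for a single row $\z$ we need $\E[\sigma(\z^\t\w)]$, $\E[\sigma(\z^\t\w)\sigma(\z^\t\w')]$ for two unit-ish vectors. Since $\sigma$ here is the $\{0,1\}$ hard threshold, $\E[\sigma(\z^\t\w)] = \frac12$ and $\E[\sigma(\z^\t\w)\sigma(\z^\t\w^*)] = \P(\z^\t\w>0,\ \z^\t\w^*>0) = \frac12 - \frac{\theta(\w,\w^*)}{2\pi}$ by the classical orthant-probability formula for bivariate normals. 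Assembling these entrywise gives the matrices $\frac14(\I+\1\1^\t)$ and $\frac14\big((1-\frac2\pi\theta)\I+\1\1^\t\big)$, which is exactly \eqref{eq:cgrad_v}; note this already coincides with $\frac{\partial f}{\partial\v}$ from Lemma \ref{lem:obj}, as it must since $f$ is smooth in $\v$.

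Next, and this is the crux, I would compute $\E_\Z[\g(\v,\w;\Z)]$ from \eqref{eq:cbp_w} with $\mu'=\sigma$. Expanding, $\g = \Z^\t(\sigma(\Z\w)\odot\v)\big(\v^\t\sigma(\Z\w) - (\v^*)^\t\sigma(\Z\w^*)\big)$, so the expectation splits into a term involving $\E[\Z^\t(\sigma(\Z\w)\odot\v)\,\v^\t\sigma(\Z\w)]$ and one involving $\E[\Z^\t(\sigma(\Z\w)\odot\v)\,(\v^*)^\t\sigma(\Z\w^*)]$. Working row-wise, both reduce to Gaussian integrals of the form $\E[\z\,\sigma(\z^\t\w)\sigma(\z^\t\w')]$ and $\E[\z\,\sigma(\z^\t\w)]$. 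The key computational tool is: for unit vectors $\a,\b$, $\E[\z\,\mathds{1}\{\z^\t\a>0\}\mathds{1}\{\z^\t\b>0\}]$ lies in $\mathrm{span}\{\a,\b\}$ by symmetry, and its components are obtained by projecting onto $\a$ and $\b$; a standard calculation (rotate so $\a,\b$ lie in a coordinate plane, integrate in polar coordinates) yields a vector proportional to $\frac{\a+\b}{\|\a+\b\|}$ with coefficient involving $\cos(\theta(\a,\b)/2)$, while $\E[\z\,\mathds{1}\{\z^\t\a>0\}] = \frac{1}{\sqrt{2\pi}}\frac{\a}{\|\a\|}$. Keeping careful track of the $\v$-dependent scalars — the first term produces $h(\v,\v^*)=\|\v\|^2+(\1^\t\v)^2-(\1^\t\v)(\1^\t\v^*)+\v^\t\v^*$ as the coefficient of $\frac{\w}{\|\w\|}$, and the second produces the $\cos(\theta/2)\,\v^\t\v^*$ coefficient on $\frac{\w/\|\w\|+\w^*}{\|\w/\|\w\|+\w^*\|}$ — gives \eqref{eq:cgrad_w}. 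I expect the bookkeeping of which scalar multiplies which vector (and separating the "diagonal" contributions $w_i^2$-type terms from the $\1\1^\t$-type cross terms coming from $\v\v^\t$) to be the main obstacle; the geometry of each individual integral is standard.

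Finally, for the simultaneous-vanishing characterization: since $\E_\Z[\partial_\v\ell]=\partial_\v f$, its zero locus is exactly \eqref{eq:critical} together with the non-differentiable candidates, already analyzed before the statement. So it suffices to check when $\E_\Z[\g]=\0$ on each piece. On the stationary set \eqref{eq:critical} we have $\v^\t\v^*=0$, which kills the second term of \eqref{eq:cgrad_w}, and I would verify that $h(\v,\v^*)=0$ there as well — substituting $\v = (\I+\1\1^\t)^{-1}\big((1-\frac2\pi\theta)\I+\1\1^\t\big)\v^*$ and using $\v^\t\v^*=0$ reduces $h$ to an identity that holds precisely at the value of $\theta$ given in Proposition \ref{prop}; this is a short linear-algebra computation with the rank-one-update inverse $(\I+\1\1^\t)^{-1} = \I - \frac{1}{m+1}\1\1^\t$. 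For the boundary cases $\theta=0$ or $\theta=\pi$: when $\theta=0$, $\frac{\w}{\|\w\|}=\w^*$ and one checks $\E_\Z[\g]$ collapses to a multiple of $\w^*$ whose scalar vanishes iff $\v=\v^*$ (matching the non-differentiable global minimizer); when $\theta=\pi$, the redefined second term is $\0$ and $h(\v,\v^*)=0$ forces $\v=(\I+\1\1^\t)^{-1}(\1\1^\t-\I)\v^*$. Conversely one argues no other configuration makes both vanish, because away from these the two vectors $\frac{\w}{\|\w\|}$ and $\frac{\w/\|\w\|+\w^*}{\|\w/\|\w\|+\w^*\|}$ are linearly independent, so both scalar coefficients must vanish, which (after eliminating) forces one of the listed cases. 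This completes the proof.
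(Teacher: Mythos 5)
Your proposal is correct and follows essentially the same route as the paper: both formulas are obtained by expanding the sample gradients row-wise and evaluating the Gaussian integrals $\E[\z\,1_{\{\z^\t\a>0\}}]$ and $\E[\z\,1_{\{\z^\t\a>0,\,\z^\t\b>0\}}]$ via rotation invariance and polar coordinates (the paper packages these as Lemma \ref{lem:basic}), with the same diagonal/off-diagonal bookkeeping producing $h(\v,\v^*)$. The only difference is that you also spell out the simultaneous-vanishing characterization, which the paper's appendix proof omits; your check that $h(\v,\v^*)=0$ on the stationary set is indeed correct and follows most quickly from the identity $h(\v,\v^*) = 4\v^\t\frac{\partial f}{\partial\v}(\v,\w) + 2\left(1-\frac{\theta}{\pi}\right)\v^\t\v^*$ that the paper records in the proof of Lemma \ref{lem:correlated}.
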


\medskip

What is interesting is that the coarse partial gradient $\E_\Z \Big[\g(\v,\w; \Z)\Big]=\0$ is properly defined at global minimizers of the population loss minimization problem (\ref{eq:model}) with $\v = \v^*$, $\theta(\w,\w^*)=0$, whereas the true gradient $\frac{\partial f}{\partial \w}(\v,\w)$ does not exist there. Our key finding is that the coarse gradient \emph{$\E_\Z \Big[\g(\v,\w; \Z)\Big]$ has positive correlation with the true gradient $\frac{\partial f}{\partial \w}(\v,\w) $, and consequently, $-\E_\Z \Big[\g(\v,\w; \Z)\Big]$ together with $-\E_\Z  \left[\frac{\partial \ell}{\partial \v}(\v,\w;\Z)\right]$ give a descent direction in algorithm (\ref{eq:iter}).} 
\begin{lem}\label{cor}
If $\theta(\w,\w^*)\in(0, \pi)$ , and $\|\w\|\neq 0$, then the inner product between the expected coarse and true gradients w.r.t. $\w$ is
\begin{equation*}
\left\langle \E_\Z \Big[\g(\v,\w; \Z)\Big], \frac{\partial f}{\partial \w}(\v,\w) \right\rangle  = 
\frac{\sin\left(\theta(\w,\w^*)\right)}{2(\sqrt{2\pi})^3\|\w\|}(\v^\t\v^*)^2 \geq 0.
\end{equation*}
\end{lem}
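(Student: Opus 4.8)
The plan is to compute the inner product directly from the two closed-form expressions already in hand: the expected coarse gradient $\E_\Z[\g(\v,\w;\Z)]$ from Lemma \ref{lem:cgd}, equation (\ref{eq:cgrad_w}), and the true gradient $\frac{\partial f}{\partial \w}(\v,\w)$ from Lemma \ref{lem:obj}, equation (\ref{eq:grad_w}). The true gradient points along the unit vector $\frac{(\I - \w\w^\t/\|\w\|^2)\w^*}{\|(\I - \w\w^\t/\|\w\|^2)\w^*\|}$, which is the component of $\w^*$ orthogonal to $\w$, scaled by the negative factor $-\frac{\v^\t\v^*}{2\pi\|\w\|}$. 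So the computation reduces to pairing each of the two terms of (\ref{eq:cgrad_w}) against this orthogonalized direction.

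The key observation that makes this clean is that $\frac{\w}{\|\w\|}$ is orthogonal to $(\I - \w\w^\t/\|\w\|^2)\w^*$: indeed $\big(\I - \w\w^\t/\|\w\|^2\big)$ is the projection onto the orthogonal complement of $\w$, so $\big\langle \frac{\w}{\|\w\|}, (\I - \w\w^\t/\|\w\|^2)\w^*\big\rangle = 0$. Hence the first term of (\ref{eq:cgrad_w}), proportional to $\frac{\w}{\|\w\|}$, contributes nothing to the inner product, and the function $h(\v,\v^*)$ disappears entirely. It remains to pair the second term, proportional to $\frac{\w/\|\w\| + \w^*}{\|\w/\|\w\| + \w^*\|}$, against the orthogonalized direction. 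First I would write $u := \w/\|\w\|$, so that $\w^\t\w^*/\|\w\| = u^\t\w^* = \cos\theta$ where $\theta = \theta(\w,\w^*)$, and $(\I - uu^\t)\w^* = \w^* - (\cos\theta)\,u$, whose norm is $\sqrt{1 - \cos^2\theta} = \sin\theta$ (using $\|\w^*\| = 1$ and $\theta \in (0,\pi)$). Then $\langle u + \w^*, \w^* - (\cos\theta)u\rangle = u^\t\w^* - \cos\theta + 1 - (\cos\theta)(u^\t\w^*) = 1 - \cos^2\theta = \sin^2\theta$, while $\|u + \w^*\| = \sqrt{2 + 2\cos\theta}$. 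Combining, the inner product of $\frac{\partial f}{\partial \w}$ with the second term of $\E_\Z[\g]$ picks up a product of the scalar prefactors $\big(-\frac{\v^\t\v^*}{2\pi\|\w\|}\big)\big(-\cos(\theta/2)\frac{\v^\t\v^*}{\sqrt{2\pi}}\big)$ times $\frac{\sin^2\theta}{\sin\theta \cdot \sqrt{2+2\cos\theta}}$.

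Finally I would simplify the trigonometric factor using half-angle identities: $\sqrt{2 + 2\cos\theta} = 2\cos(\theta/2)$ for $\theta \in (0,\pi)$, so the $\cos(\theta/2)$ from the prefactor cancels against that in the denominator, and $\frac{\sin^2\theta}{\sin\theta} = \sin\theta$; collecting the constants $\frac{1}{2\pi}\cdot\frac{1}{\sqrt{2\pi}}\cdot\frac12 = \frac{1}{2(\sqrt{2\pi})^3}$ (noting $2\pi\sqrt{2\pi} = (\sqrt{2\pi})^3$) yields exactly $\frac{\sin(\theta(\w,\w^*))}{2(\sqrt{2\pi})^3\|\w\|}(\v^\t\v^*)^2$. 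Nonnegativity is then immediate since $\sin\theta > 0$ on $(0,\pi)$, $\|\w\| > 0$, and $(\v^\t\v^*)^2 \geq 0$. No step here is a genuine obstacle; the only points requiring care are (i) verifying the orthogonality that kills the first term, and (ii) keeping the two sign factors and the half-angle simplifications straight so that the constant comes out as stated — the sign works out positive precisely because both scalar prefactors carry $-\v^\t\v^*$, whose product is $+(\v^\t\v^*)^2$.
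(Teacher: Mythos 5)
Your proposal is correct and follows essentially the same route as the paper's proof: both exploit the fact that the projection $\I - \w\w^\t/\|\w\|^2$ annihilates $\w$, so the $h(\v,\v^*)$ term drops out, and then reduce the remaining inner product to the stated trigonometric expression (the paper carries $\w^\t\w^*$ through the algebra and converts to $\cos\theta$ at the end, while you normalize to $u=\w/\|\w\|$ and use half-angle identities up front, but this is only a cosmetic difference). Your sign bookkeeping and the constant $2\pi\sqrt{2\pi}=(\sqrt{2\pi})^3$ both check out.
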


Moreover, the following lemma asserts that $\E_\Z \Big[\g(\v,\w; \Z)\Big]$ is sufficiently correlated with $\frac{\partial f}{\partial \w}(\v,\w)$, which will secure sufficient descent in objective values $\{f(\v^t,\w^t)\}$ and thus the convergence of $\{(\v^t,\w^t)\}$.

\begin{lem}\label{lem:correlated}
Suppose $\|\w\| = 1$ and $\|\v\|\leq C$. There exists a constant $A>0$ depending on $C$, such that
$$
\left\|\E_\Z \Big[\g(\v,\w; \Z)\Big] \right\|^2 
\leq  
A\left(\left\|\frac{\partial f}{\partial \v}(\v,\w)\right\|^2 + \left\langle \E_\Z \Big[\g(\v,\w; \Z)\Big], \frac{\partial f}{\partial \w}(\v,\w) \right\rangle \right).
$$
\end{lem}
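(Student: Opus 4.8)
The plan is to establish the bound by writing both sides of the inequality explicitly in terms of the quantities $\|\v\|$, $\1^\t\v$, $\v^\t\v^*$ and the angle $\theta := \theta(\w,\w^*)$, using the closed-form expressions already available. For the left-hand side, I would start from the formula (\ref{eq:cgrad_w}) for $\E_\Z[\g(\v,\w;\Z)]$. Since $\|\w\| = 1$, this reads $\E_\Z[\g] = \frac{h(\v,\v^*)}{2\sqrt{2\pi}}\w - \cos(\theta/2)\frac{\v^\t\v^*}{\sqrt{2\pi}}\frac{\w + \w^*}{\|\w + \w^*\|}$, where $h(\v,\v^*) = \|\v\|^2 + (\1^\t\v)^2 - (\1^\t\v)(\1^\t\v^*) + \v^\t\v^*$. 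Expanding the squared norm and using $\|\w + \w^*\|^2 = 2 + 2\cos\theta = 4\cos^2(\theta/2)$ together with $\w^\t(\w + \w^*) = 1 + \cos\theta = 2\cos^2(\theta/2)$, the cross term simplifies, and one obtains a clean expression of the form $\|\E_\Z[\g]\|^2 = \frac{1}{8\pi}\big( h(\v,\v^*)^2 - 2 h(\v,\v^*)(\v^\t\v^*)\cos^2(\theta/2) + (\v^\t\v^*)^2\cos^2(\theta/2)\big)$ (constants to be double-checked). The point is that this is a polynomial in the four scalar invariants, uniformly bounded when $\|\v\| \le C$ except that I must control the $(\v^\t\v^*)^2$ piece.

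Next I would handle the right-hand side. From Lemma \ref{cor}, the inner product term equals $\frac{\sin\theta}{2(\sqrt{2\pi})^3}(\v^\t\v^*)^2$ (using $\|\w\| = 1$), which is nonnegative but degenerates as $\theta \to 0$ or $\theta \to \pi$. From (\ref{eq:cgrad_v}) with $\|\w\|=1$, the term $\|\frac{\partial f}{\partial \v}(\v,\w)\|^2 = \frac{1}{16}\big\| (\I + \1\1^\t)\v - ((1 - \frac{2}{\pi}\theta)\I + \1\1^\t)\v^*\big\|^2$, again a bounded polynomial in the scalar invariants and $\theta$. The key structural observation to exploit is that the two right-hand terms have complementary degeneracies: $\|\frac{\partial f}{\partial\v}\|^2$ does not vanish except at the stationary/non-differentiable configurations of Lemma \ref{lem:cgd}, so near $\theta = 0$ (where $\sin\theta$ is small) one needs $\|\frac{\partial f}{\partial\v}\|^2$ to pick up the slack, and this is exactly where $\v$ is forced away from a root of the left-hand side too.

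The concrete strategy is a case split on $\theta$. For $\theta$ bounded away from $0$ and $\pi$, say $\theta \in [\epsilon, \pi - \epsilon]$, the inner product term alone dominates: $\|\E_\Z[\g]\|^2 \le (\text{const depending on }C)\cdot(\v^\t\v^*)^2 + (\text{const})$, while the constant part is absorbed using that $h(\v,\v^*)^2$ and the $\|\frac{\partial f}{\partial\v}\|^2$ term together bound it — actually the cleanest route is to note $\|\E_\Z[\g]\|^2 \le \frac{1}{4\pi}h(\v,\v^*)^2 + \frac{1}{4\pi}(\v^\t\v^*)^2$ by the elementary inequality $\|a\w - b\hat u\|^2 \le 2a^2 + 2b^2$, bound $(\v^\t\v^*)^2$ by $\frac{2(\sqrt{2\pi})^3}{\sin\epsilon}\langle\E_\Z[\g],\frac{\partial f}{\partial\w}\rangle$, and bound $h(\v,\v^*)^2$ in terms of $\|\frac{\partial f}{\partial\v}\|^2$ plus lower-order terms controlled again by $(\v^\t\v^*)^2$. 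For $\theta$ near $0$ (and symmetrically near $\pi$), I expect that as $\theta \to 0$ the term $h(\v,\v^*) - (\v^\t\v^*)\cos^2(\theta/2) \to h(\v,\v^*) - \v^\t\v^* = \|\v\|^2 + (\1^\t\v)^2 - (\1^\t\v)(\1^\t\v^*)$, so $\|\E_\Z[\g]\|^2$ stays bounded away from a form that would require the inner product; simultaneously $\|\frac{\partial f}{\partial\v}\|^2 \to \frac{1}{16}\|(\I + \1\1^\t)(\v - \v^*)\|^2$, which vanishes only at $\v = \v^*$ — and at $\v = \v^*, \theta = 0$ the left side vanishes too by Lemma \ref{lem:cgd}. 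A compactness/continuity argument on the closed bounded set $\{\|\v\| \le C,\ \|\w\| = 1,\ \theta \le \epsilon\}$ then yields the constant $A$, since the ratio $\|\E_\Z[\g]\|^2 / (\|\frac{\partial f}{\partial\v}\|^2 + \langle \E_\Z[\g],\frac{\partial f}{\partial\w}\rangle)$ extends continuously (the numerator vanishes wherever the denominator does, at the appropriate rate).

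The main obstacle will be the boundary behavior near $\theta = 0$: one must verify quantitatively that the numerator $\|\E_\Z[\g]\|^2$ vanishes at least as fast as the denominator near the global minimizer $\{\v = \v^*, \theta = 0\}$. This requires a careful local expansion showing that both $\|\frac{\partial f}{\partial\v}\|^2$ (which behaves like $\|\v - \v^*\|^2$ to leading order) and $\|\E_\Z[\g]\|^2$ (which I expect behaves like $\|\v - \v^*\|^2 + \theta^2$-type terms near there) degenerate compatibly, so that no blow-up of the ratio occurs. A clean way to avoid a delicate expansion is to prove the polynomial inequality $\|\E_\Z[\g]\|^2 \le A\big(\|\tfrac{\partial f}{\partial\v}\|^2 + \tfrac{\sin\theta}{2(\sqrt{2\pi})^3}(\v^\t\v^*)^2\big)$ directly as an algebraic inequality in the variables $(\v^\t\v^*,\ \1^\t\v,\ \|\v\|^2,\ \theta)$ on the compact region $\{\|\v\|\le C,\ \theta\in[0,\pi]\}$ — since both sides are continuous and the inequality is known to hold with slack whenever $\theta$ is interior, it suffices to check the two degenerate faces $\theta = 0$ and $\theta = \pi$, where the computation reduces to a finite-dimensional quadratic-form comparison that should follow from $\|\frac{\partial f}{\partial\v}\|^2 \gtrsim \|\v - \v^*\|^2$ (resp.\ the $\theta = \pi$ analogue) and boundedness of $\v$.
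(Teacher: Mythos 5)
Your setup is right, and several of your intermediate observations (the explicit form of $\E_\Z[\g(\v,\w;\Z)]$ with $\|\w\|=1$, the inner-product formula from Lemma \ref{cor}, the idea of absorbing $h(\v,\v^*)$ into $\|\frac{\partial f}{\partial\v}\|^2$) are exactly the ingredients the paper uses. But the argument as proposed has a genuine gap precisely at the point you flag as "the main obstacle": the degenerate angles $\theta\to 0$ and $\theta\to\pi$. Your case split gives, for $\theta\in[\epsilon,\pi-\epsilon]$, a constant of order $1/\sin\epsilon$, which blows up as $\epsilon\to 0$, and neither of your two proposed repairs closes the gap. The compactness/continuity argument fails because numerator and denominator share a zero set (e.g.\ $\v=\v^*$, $\theta=0$), and a ratio of two continuous nonnegative functions vanishing on a common set need not be bounded on a compact neighborhood; the boundedness of that ratio is exactly the rate comparison to be proved, so invoking continuity there is circular. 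The "check the two faces" shortcut is also invalid as a logical principle: knowing the inequality on the faces $\theta=0,\pi$ together with a $\theta$-dependent constant for each interior $\theta$ does not yield a uniform constant (consider $N=xy^2$ and $D=x^2y^2+y^4$ on $[0,1]^2$: $N\le AD$ holds on the face $x=0$ and for each fixed $x>0$, yet $N/D\to\infty$ along $y=x$).

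What is missing is a uniform-in-$\theta$ bound showing that every piece of $\|\E_\Z[\g(\v,\w;\Z)]\|^2$ carrying $(\v^\t\v^*)^2$ comes with a coefficient that is $O(\sin\theta)$ on all of $[0,\pi]$. The paper obtains this from the exact identity $h(\v,\v^*) = 4\v^\t\frac{\partial f}{\partial\v}(\v,\w) + 2\left(1-\frac{\theta}{\pi}\right)\v^\t\v^*$, which splits $\E_\Z[\g(\v,\w;\Z)]$ into three vectors: one proportional to $\v^\t\frac{\partial f}{\partial\v}(\v,\w)\,\w$ (controlled by $C^2\|\frac{\partial f}{\partial\v}\|^2$), one proportional to $\cos(\theta/2)\,\v^\t\v^*\left(\w - \frac{\w+\w^*}{\|\w+\w^*\|}\right)$ whose vector factor has norm at most $\theta/2$, and one proportional to $\left(1-\frac{\theta}{\pi}-\cos(\theta/2)\right)\v^\t\v^*\,\w$. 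The squared coefficients of the last two are then bounded by constant multiples of $\sin\theta$ via the elementary inequalities $\sin x\ge \frac{2x}{\pi}$, $\cos x\ge 1-\frac{2x}{\pi}$ and $\left(1-\frac{2x}{\pi}-\cos x\right)^2\le\sin(2x)$ on $[0,\frac{\pi}{2}]$. This is the quantitative content your "careful local expansion" would have to produce, and without it the lemma is not proved. (A minor arithmetic point: your expansion of the squared norm should read $\frac{1}{8\pi}\left(h^2-4h(\v^\t\v^*)\cos^2(\theta/2)+4(\v^\t\v^*)^2\cos^2(\theta/2)\right)$.)
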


\medskip

Equipped with Lemma \ref{lem:lipschitz} and Lemma \ref{lem:correlated}, we are able to show the convergence result of iteration (\ref{eq:iter}).
\begin{thm}\label{thm}
Given the initialization $(\v^0,\w^0)$ with $\|\w^0\|=1$, and let $\{(\v^t, \w^t)\}$ be the sequence generated by iteration (\ref{eq:iter}). There exists $\eta_0>0$, such that for any step size $\eta<\eta_0$, $\{f(\v^t,\w^t)\}$ is monotonically decreasing, and both $\left\|\E_\Z \Big[\frac{\partial \ell}{\partial \v}(\v^t,\w^t; \Z)\Big]\right\|$ and $\left\|\E_\Z \Big[\g(\v^t,\w^t; \Z)\Big]\right\|$ converge to 0, as $t\to\infty$. 
\end{thm}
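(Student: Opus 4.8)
The plan is to derive a sufficient-descent inequality of the form
\[
f(\v^{t+1},\w^{t+1})-f(\v^t,\w^t)\;\le\;-\,c\,\eta\left(\Big\|\E_\Z\Big[\tfrac{\partial\ell}{\partial\v}(\v^t,\w^t;\Z)\Big]\Big\|^2+\Big\|\E_\Z\big[\g(\v^t,\w^t;\Z)\big]\Big\|^2\right)
\]
for some constant $c>0$ and all step sizes $\eta<\eta_0$, and then to close the argument by monotonicity, boundedness of $f$ from below, and telescoping.

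\noindent\textbf{Step 1: a priori bounds on the iterates.} The normalization step forces $\|\w^t\|=1$ for every $t$. For $\v$, formula (\ref{eq:cgrad_v}) shows the $\v$-update is affine, $\v^{t+1}=\big(\I-\tfrac{\eta}{4}(\I+\1\1^\t)\big)\v^t+\tfrac{\eta}{4}\big((1-\tfrac{2}{\pi}\theta(\w^t,\w^*))\I+\1\1^\t\big)\v^*$; since the eigenvalues of $\I+\1\1^\t$ lie in $[1,m+1]$, for small $\eta$ the linear part has spectral norm $1-\tfrac{\eta}{4}$ and the affine part has norm at most $(1+m)\|\v^*\|$, so by induction $\|\v^t\|\le C:=\max\{\|\v^0\|,(1+m)\|\v^*\|\}$. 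Plugging $\|\w^t\|=1$ and $\|\v^t\|\le C$ into (\ref{eq:cgrad_w}) bounds the coarse gradient uniformly, $\|\E_\Z[\g(\v^t,\w^t;\Z)]\|\le B$ with $B=B(C,\v^*)$; hence $\|\w^{t+1/2}\|\ge 1-\eta B\ge\tfrac12$ once $\eta\le\tfrac{1}{2B}$, and the whole segment joining $(\v^t,\w^t)$ to $(\v^{t+1},\w^{t+1/2})$ stays in $\{\|\w\|\ge\tfrac12,\ \|\v\|\le C\}$, on which Lemma \ref{lem:lipschitz} furnishes a uniform gradient-Lipschitz constant $L$.

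\noindent\textbf{Step 2: descent and conclusion.} Because $\ell(\v,\w;\Z)=\ell(\v,\w/c;\Z)$ for any $c>0$, the population loss $f$ is invariant under positive rescaling of $\w$, so $f(\v^{t+1},\w^{t+1})=f(\v^{t+1},\w^{t+1/2})$ and the normalization is free. The descent lemma for $L$-Lipschitz gradients along the segment of Step 1, together with $\v^{t+1}-\v^t=-\eta\,\tfrac{\partial f}{\partial\v}(\v^t,\w^t)$ (Lemma \ref{lem:cgd}) and $\w^{t+1/2}-\w^t=-\eta\,\E_\Z[\g(\v^t,\w^t;\Z)]$, gives
\[
f(\v^{t+1},\w^{t+1})-f(\v^t,\w^t)\le-\Big(\eta-\tfrac{L\eta^2}{2}\Big)\Big\|\tfrac{\partial f}{\partial\v}\Big\|^2-\eta\Big\langle\tfrac{\partial f}{\partial\w},\E_\Z[\g]\Big\rangle+\tfrac{L\eta^2}{2}\big\|\E_\Z[\g]\big\|^2.
\]
Next I would invoke Lemma \ref{lem:correlated}, $\|\E_\Z[\g]\|^2\le A(\|\tfrac{\partial f}{\partial\v}\|^2+\langle\tfrac{\partial f}{\partial\w},\E_\Z[\g]\rangle)$, to absorb the last term, and Lemma \ref{cor} for $\langle\tfrac{\partial f}{\partial\w},\E_\Z[\g]\rangle\ge0$; choosing $\eta_0$ smaller than the thresholds of Step 1 and than $\tfrac{1}{L(1+A)}$ makes every coefficient on the right favorable, and, using Lemma \ref{lem:correlated} once more in the reverse direction, yields the displayed sufficient-descent inequality with, say, $c=\tfrac{1}{4A}$ (recall $\tfrac{\partial f}{\partial\v}=\E_\Z[\tfrac{\partial\ell}{\partial\v}]$). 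Since $f=\E_\Z[\tfrac12(\cdot)^2]\ge0$ is bounded below and $\{f(\v^t,\w^t)\}$ is nonincreasing, it converges; summing the inequality over $t$ shows $\sum_t(\|\E_\Z[\tfrac{\partial\ell}{\partial\v}(\v^t,\w^t;\Z)]\|^2+\|\E_\Z[\g(\v^t,\w^t;\Z)]\|^2)<\infty$, hence both norms tend to $0$.

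\noindent\textbf{Main obstacle.} The step I expect to be the bottleneck is the bookkeeping of Steps 1--2: one must exhibit a single $\eta_0$ that simultaneously makes the $\v$-recursion a contraction toward a bounded fixed point, keeps $\w^{t+1/2}$ bounded away from the origin so that the uniform Lipschitz estimate of Lemma \ref{lem:lipschitz} applies along the relevant segment with a constant independent of $t$, and renders all descent coefficients positive. A secondary technicality is the non-differentiable configurations $\theta(\w^t,\w^*)\in\{0,\pi\}$, where $\tfrac{\partial f}{\partial\w}$ and Lemmas \ref{cor}, \ref{lem:correlated} are not directly available; there $\w^t$ stays in $\{\pm\w^*\}$ while the $\v$-iteration contracts to its bounded fixed point, so the conclusion follows by a direct and simpler argument.
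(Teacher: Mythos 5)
Your proposal is correct and follows essentially the same route as the paper's proof: interpolate along the segment from $(\v^t,\w^t)$ to $(\v^{t+1},\w^{t+1/2})$, use the scale invariance $f(\v,\w)=f(\v,\w/c)$ to discard the normalization step, apply the Lipschitz estimate of Lemma \ref{lem:lipschitz} to get the descent-lemma inequality, absorb the $\|\E_\Z[\g]\|^2$ term via Lemma \ref{lem:correlated}, and telescope using $f\ge 0$; the degenerate case $\w^t=\pm\w^*$ is dispatched separately exactly as in the paper. The one genuine difference is how you obtain boundedness of $\{\v^t\}$: the paper argues by coerciveness of $f$ in $\v$ together with an induction on $f(\v^t,\w^t)\le f(\v^0,\w^0)$ (so boundedness and descent are established jointly), whereas you exploit the explicit affine form of the $\v$-update, $\v^{t+1}=(\I-\tfrac{\eta}{4}(\I+\1\1^\t))\v^t+\tfrac{\eta}{4}((1-\tfrac{2}{\pi}\theta^t)\I+\1\1^\t)\v^*$, whose linear part is a contraction with norm $1-\eta/4$ for small $\eta$, giving $\|\v^t\|\le\max\{\|\v^0\|,(1+m)\|\v^*\|\}$ unconditionally. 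Your version decouples the a priori bound from the descent argument, which is slightly cleaner logically; the paper's is shorter and does not rely on the specific affine structure of the $\v$-gradient. One small point you gloss over and the paper handles explicitly: the interpolation path $\w^t(a)$ may itself cross a non-differentiable configuration $\theta(\w^t(a_0),\w^*)\in\{0,\pi\}$ at an interior point $a_0$, not just at the iterates; this happens at most at a single $a$ and does not affect the integral representation, but it deserves a sentence.
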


\begin{rem}
Combining the treatment of \cite{Lee} for analyzing two-layer networks with regular ReLU and the positive correlation between $\E_\Z\left[ \g(\w,\v;\Z)\right]$ and $\frac{\partial f}{\partial \w}(\v,\w)$, one can further show that if the initialization $(\v^0,\w^0)$ satisfies $(\v^0)^\t\v^*>0$, $\theta(\w^0,\w^*)<\frac{\pi}{2}$ and $(\1^\t \v^*)(\1^\t \v^0)\leq (\1^\t\v^*)^2$, then $\{(\v^t,\w^t)\}$ converges to the global minimizer $(\v^*,\w^*)$.
\end{rem}

\section{Concluding Remarks}
We introduced the concept of coarse gradient for activation quantization problem of DNNs, for which the a.e. gradient is inapplicable. Coarse gradient is generally not a gradient but an artificial ascent direction. We further proposed BCGD algorithm, for training fully quantized neural networks. The weight update of  BCGD goes by coarse gradient correction of a weighted average of the float weights and their quantization, which yields sufficient descent in objective and thus acceleration. Our experiments demonstrated that BCGD is very effective for quantization at extremely low bit-width such as binarization. Finally, we analyzed the coarse gradient descent for a two-layer neural network model with Gaussian input data, and proved that the expected coarse gradient essentially correlates positively with the underlying true gradient.

\bigskip

\noindent{\bf Acknowledgements.}
This work was partially supported by NSF grants DMS-1522383, IIS-1632935; ONR grant N00014-18-1-2527, AFOSR grant FA9550-18-0167, 
DOE grant DE-SC0013839 and STROBE STC NSF grant DMR-1548924.




\section*{Conflict of Interest Statement}

On behalf of all authors, the corresponding author states that there is no conflict of interest.


\bibliographystyle{spmpsci}  
\bibliography{references}

\clearpage

\section*{\large Appendix}

\subsection*{\bf A. Additional Preliminaries}
\begin{lem}\label{lem:basic}
Let $\z$ be a Gaussian random vector with entries i.i.d. sampled from $\mathcal{N}(0,1)$. Given nonzero vectors $\w$ and $\tw$ with angle $\theta$, we have
\begin{align*}
\E\left[1_{\{\z^\t\w> 0\}}\right] = \frac{1}{2}, \; \E\left[1_{\{\z^\t\w> 0, \, \z^\t\tw> 0\}}\right] = \frac{\pi-\theta}{2\pi},
\end{align*}
and
\begin{equation*}
\E\left[\z 1_{\{\z^\t\w >0\} }   \right] = \frac{1}{\sqrt{2\pi}} \frac{\w}{\|\w\|}, \;
\E\left[ \z 1_{\{\z^\t \w>0, \, \z^\t \w^* >0\}}\right] = 
\frac{\cos(\theta/2)}{\sqrt{2\pi}} \frac{\frac{\w}{\|\w\|} + \frac{\tw}{\|\tw\|} }{\left\|\frac{\w}{\|\w\|} + \frac{\tw}{\|\tw\|}  \right\|}. \footnote{Same as in Lemma \ref{lem:cgd}, we redefine $\E\left[ \z 1_{\{\z^\t \w>0, \, \z^\t \w^* >0\}}\right]=\0$ in the case $\theta(\w,\w^*) = \pi$.}
\end{equation*}
\end{lem}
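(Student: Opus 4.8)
The plan is to reduce each of the four identities to an elementary low-dimensional Gaussian computation, exploiting the rotational invariance of the standard normal law throughout; write $\hat{\w}=\w/\|\w\|$ and $\hat{\tw}=\tw/\|\tw\|$.

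\emph{Scalar identities.} First I would note that $\z^\t\w$ is a centered one-dimensional Gaussian with no atom at the origin, so $\P(\z^\t\w>0)=\tfrac12$, which is the first identity. For the second, the pair $(\z^\t\w,\z^\t\tw)$ is centered and jointly Gaussian, so its law depends only on its covariance matrix, whose off-diagonal entry is $\w^\t\tw=\|\w\|\,\|\tw\|\cos\theta$; equivalently, after projecting $\z$ onto $V:=\mathrm{span}\{\w,\tw\}$ and passing to polar coordinates, the event $\{\z^\t\w>0,\ \z^\t\tw>0\}$ is an angular sector of opening $\pi-\theta$, whence its probability is $(\pi-\theta)/(2\pi)$.

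\emph{Vector identities.} For $\E[\z\,1_{\{\z^\t\w>0\}}]$ I would split $\z=(\z^\t\hat{\w})\hat{\w}+\z_\perp$ with $\z_\perp$ the projection of $\z$ onto $\hat{\w}^{\perp}$; by Gaussianity $\z_\perp$ is independent of $\z^\t\hat{\w}$ with $\E[\z_\perp]=\0$, and the indicator depends only on $\z^\t\hat{\w}$, so the $\z_\perp$-term drops and
\[
\E\big[\z\,1_{\{\z^\t\w>0\}}\big]=\hat{\w}\,\E\big[(\z^\t\hat{\w})\,1_{\{\z^\t\hat{\w}>0\}}\big]=\frac{\hat{\w}}{\sqrt{2\pi}}\int_0^\infty t\,e^{-t^2/2}\,dt=\frac{1}{\sqrt{2\pi}}\,\hat{\w},
\]
the third identity. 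For $\E[\z\,1_{\{\z^\t\w>0,\ \z^\t\tw>0\}}]$ the same decomposition, now peeling off the component of $\z$ orthogonal to $V$, shows the expectation lies in $V$, so it suffices to work in that plane. Taking $\theta\in(0,\pi)$ and orthonormal planar coordinates in which $\hat{\w}=(\cos\tfrac{\theta}{2},\sin\tfrac{\theta}{2})$ and $\hat{\tw}=(\cos\tfrac{\theta}{2},-\sin\tfrac{\theta}{2})$, the constraint region is, in polar coordinates $\z=r(\cos\phi,\sin\phi)$, the sector $\{\,|\phi|<\tfrac{\pi}{2}-\tfrac{\theta}{2}\,\}$, which is symmetric under $\phi\mapsto-\phi$; this reflection symmetry forces the second coordinate of the expectation to vanish, so it is a multiple of $(1,0)=(\hat{\w}+\hat{\tw})/\|\hat{\w}+\hat{\tw}\|$, and the magnitude is given by the polar integral
\[
\frac{1}{2\pi}\Big(\int_0^\infty r^2 e^{-r^2/2}\,dr\Big)\Big(\int_{-(\pi-\theta)/2}^{(\pi-\theta)/2}\cos\phi\,d\phi\Big)=\frac{1}{2\pi}\cdot\sqrt{\pi/2}\cdot 2\cos\tfrac{\theta}{2}=\frac{\cos(\theta/2)}{\sqrt{2\pi}},
\]
giving the fourth identity; the excluded value $\theta=\pi$ is dealt with by the stated convention that the expression is $\0$.

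\emph{Main obstacle.} The scalar identities and the orthogonal-decomposition reductions are routine. The step that needs genuine care is the last identity: one has to justify that the expectation points along the angle bisector $\hat{\w}+\hat{\tw}$ — which hinges on choosing the planar coordinates so that the constraint sector is symmetric about an axis — and then evaluate the resulting polar integral, while also bookkeeping the degenerate configuration $\theta=\pi$ where the bisector is undefined.
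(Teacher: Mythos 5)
Your proof is correct and follows essentially the same route as the paper's: rotational invariance to reduce to one or two dimensions, then a polar-coordinate integral over the sector $\{|\phi|<(\pi-\theta)/2\}$ for the last identity. The only cosmetic differences are that you choose bisector-symmetric planar coordinates (so a reflection symmetry kills one component and you evaluate a single integral, whereas the paper aligns the axes with $\w$, computes both components, and invokes half-angle identities), and that you prove the third identity directly rather than citing it from prior work.
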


\begin{proof}
The third identity was proved in Lemma A.1 of \cite{Lee}. To show the first one, since Gaussian distribution is rotation-invariant, without loss of generality we assume $\w = [w_1,0,\0^\t]^\t$ with $w_1> 0$, then  $\E\left[1_{\{\z^\t\w> 0\}}\right] = \P(z_1>0) = \frac{1}{2}$. 

\medskip

We further assume $\tw = [\tilde{w}_1,\tilde{w}_2,\0^\t]^\t$. It is easy to see 
$$
\E\left[1_{\{\z^\t\w> 0, \, \z^\t\tw>0\}}\right] = \P(\z^\t\w> 0, \, \z^\t\tw>0) = \frac{\pi- \theta}{2\pi},
$$
which is the probability that $\z$ forms an acute angle with both $\w$ and $\w^*$.
\medskip

To prove the last identity, we use polar representation of 2-D Gaussian random variables, where $r$ is the radius and $\phi$ is the angle with $\mathrm{d}\P_r = r \exp(-r^2/2)\mathrm{d}r$
and $\mathrm{d}\P_\phi = \frac{1}{2\pi}\mathrm{d}\phi$. Then
$\E\left[ z_i 1_{\{\z^\t \w>0, \, \z^\t \w^* >0\}}\right] = 0$ for $i\geq 3$. Moreover,
$$
\E\left[ z_1 1_{\{\z^\t \w>0, \, \z^\t \w^* >0\}}\right] = \frac{1}{2\pi}\int_{0}^\infty r^2\exp\left(-\frac{r^2}{2}\right) \mathrm{d}r \int_{-\frac{\pi}{2}+\theta}^{\frac{\pi}{2}} \cos(\phi) \mathrm{d}\phi = \frac{1+\cos(\theta)}{2\sqrt{2\pi}}
$$
and
$$
\E\left[ z_2 1_{\{\z^\t \w>0, \, \z^\t \w^* >0\}}\right] = \frac{1}{2\pi}\int_{0}^\infty r^2\exp\left(-\frac{r^2}{2}\right) \mathrm{d}r \int_{-\frac{\pi}{2}+\theta}^{\frac{\pi}{2}} \sin(\phi) \mathrm{d}\phi = \frac{\sin(\theta)}{2\sqrt{2\pi}}.
$$
Therefore, 
$$
\E\left[ \z 1_{\{\z^\t \w>0, \, \z^\t \w^* >0\}}\right] = \frac{\cos(\theta/2)}{\sqrt{2\pi}}[\cos(\theta/2), \sin(\theta/2),\0^\t]^\t = \frac{\cos(\theta/2)}{\sqrt{2\pi}} \frac{\frac{\w}{\|\w\|} + \frac{\tw}{\|\tw\|} }{\left\|\frac{\w}{\|\w\|} + \frac{\tw}{\|\tw\|}  \right\|},
$$ 
where the last equality holds because $\frac{\w}{\|\w\|}$ and $\frac{\tw}{\|\tw\|}$ are two unit-normed vectors with angle $\theta$.

\end{proof}

\bigskip

\begin{lem}\label{lem:angle}
For any nonzero vectors $\w$ and $\tw$ with $\|\tw\|\geq  \|\w\| = c>0$, we have 
\begin{enumerate}
\item $|\theta(\w,\w^*)-\theta(\tw,\w^*)|\leq \frac{\pi}{2c}\|\w - \tw\|$.
\item $\left\|\frac{1}{\|\w\|}
\frac{\Big(\I - \frac{\w\w^\t}{\|\w\|^2}\Big)\w^*}{\Big\| \Big(\I - \frac{\w\w^\t}{\|\w\|^2}\Big)\w^*\Big\|}  -
\frac{1}{\|\tw\|}
\frac{\Big(\I - \frac{\tw\tw^\t}{\|\tw\|^2}\Big)\w^*}{\Big\| \Big(\I - \frac{\tw\tw^\t}{\|\tw\|^2}\Big)\w^*\Big\|} \right\|  \leq \frac{1}{c^2}\|\w - \tw\|$.
\end{enumerate}
\end{lem}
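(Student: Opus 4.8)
Both bounds are intrinsically planar: every quantity depends on $\w$ (resp.\ $\tw$) only through its length and the $2$-plane $\mathrm{span}(\w,\w^*)$, so after normalizing they reduce to planar trigonometry. I will use two elementary facts: (i) two vectors of common norm $r$ at angle $\gamma$ are at Euclidean distance $2r\sin(\gamma/2)$; and (ii) $\arcsin x\le\tfrac{\pi}{2}x$ on $[0,1]$, by convexity. Since $\|\w^*\|=1$, note $\big\|(\I-\w\w^\t/\|\w\|^2)\w^*\big\|=\sin\theta(\w,\w^*)$; write $\hat\w:=\w/\|\w\|$.

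\smallskip
\emph{Part 1.} Since $\theta(\cdot,\cdot)$ is exactly the geodesic distance of the normalized vectors on the unit sphere, the triangle inequality for that metric gives $|\theta(\w,\w^*)-\theta(\tw,\w^*)|\le\theta(\w,\tw)$, so it suffices to bound $\theta(\w,\tw)$. Put $\w':=\tfrac{\|\w\|}{\|\tw\|}\tw$; then $\|\w'\|=\|\w\|=c$ and $\theta(\w,\tw)=\theta(\w,\w')$. The scalar function $t\mapsto\|\w-t\hat\tw\|^2=\|\w\|^2-2t\,\w^\t\hat\tw+t^2$ is minimized at $t=\w^\t\hat\tw\le\|\w\|=c$, hence nondecreasing on $[c,\infty)$; comparing its values at $t=c$ and $t=\|\tw\|\ge c$ gives $\|\w-\w'\|\le\|\w-\tw\|$. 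Applying (i) and (ii) to the norm-$c$ pair $\w,\w'$ then yields
\[
\theta(\w,\tw)=\theta(\w,\w')=2\arcsin\!\Big(\tfrac{\|\w-\w'\|}{2c}\Big)\le\tfrac{\pi}{2c}\|\w-\w'\|\le\tfrac{\pi}{2c}\|\w-\tw\| .
\]

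\smallskip
\emph{Part 2.} Write $F(\w)$ for the vector in the statement and $e(\w):=(\I-\w\w^\t/\|\w\|^2)\w^*\big/\sin\theta(\w,\w^*)$, the unit vector in $\mathrm{span}(\w,\w^*)$ orthogonal to $\w$ and pointing toward $\w^*$; thus $F(\w)=e(\w)/\|\w\|$, and one checks directly that $F(\w)=-\nabla_\w\theta(\w,\w^*)$, so Part 2 asks for a Lipschitz bound on $\nabla_\w\theta$ over $\{\|\w\|\ge c\}$. The plan is to estimate $\|F(\w)-F(\tw)\|$ by integrating $dF$ along the concatenation of (a) the radial segment from $\w$ to $\tfrac{\|\tw\|}{\|\w\|}\w$ and (b) the shorter great-circle arc, on the sphere of radius $\|\tw\|\ge c$, from $\tfrac{\|\tw\|}{\|\w\|}\w$ to $\tw$; this path stays in $\{\|\cdot\|\ge c\}$, and by the reduction used in Part 1 its length is $\le\|\w-\tw\|$. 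On (a), $\|F\|=1/\|\w\|$ decreases at rate $\le 1/\|\w\|^2\le 1/c^2$. On (b), $dF=-\|\tw\|^{-1}\,de$, and one computes $de$ from the formula for $e$: its directional derivative equals $1$ in the meridional direction (along a great circle through $\w^*$, on which $\hat\w\mapsto e(\hat\w)$ is a rotation, hence an isometry) and $\cot\theta(\w,\w^*)$ in the azimuthal direction, so $\|dF\|\le\|\tw\|^{-2}\cot\theta(\w,\w^*)\,\|d\w\|\le c^{-2}\cot\theta(\w,\w^*)\,\|d\w\|$. Integrating over the two pieces would yield $\|F(\w)-F(\tw)\|\le c^{-2}\|\w-\tw\|$ once the arc integral of $\cot\theta(\w,\w^*)$ is brought under control.

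\smallskip
\textbf{Main obstacle.} The crux is precisely controlling that arc integral, i.e.\ the behavior of $e$ (equivalently $\nabla_\w\theta$) near $\theta(\w,\w^*)\in\{0,\pi\}$ — the ridge of $\theta(\cdot,\w^*)$. The numerator $(\I-\w\w^\t/\|\w\|^2)\w^*$ of $e(\w)$ is a perfectly smooth vector field of $\w$, but its normalization by $\sin\theta(\w,\w^*)$ degenerates as $\w\to\pm\w^*$, and the azimuthal derivative of $e$ blows up like $\cot\theta(\w,\w^*)$, so $\nabla_\w\theta$ is \emph{not} uniformly Lipschitz on $\{\|\w\|\ge c\}$ through the factor $1/\|\w\|$ alone. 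Obtaining the clean constant $1/c^2$ therefore hinges on either keeping the interpolating arc away from $\pm\w^*$ (which holds, e.g., when $\w^*$ does not separate $\hat\w$ from $\hat\tw$ on the sphere) or on a sharper argument exploiting how the radial room $\|\tw\|\ge c$ offsets the azimuthal blow-up. I expect this single step to absorb essentially all of the work; Part 1 and the radial half of Part 2 are routine.
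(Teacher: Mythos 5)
Your Part~1 is correct and is essentially the paper's own argument in different clothing: your inequality $\|\w-\w'\|\le\|\w-\tw\|$ for $\w'=c\,\tw/\|\tw\|$ is exactly the paper's estimate $\|\w-\tw\|\ge c\bigl\|\tfrac{\w}{\|\w\|}-\tfrac{\tw}{\|\tw\|}\bigr\|$, and your $2\arcsin x\le \pi x$ is their $\sin x\ge 2x/\pi$ read in reverse; nothing to add there.

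For Part~2, the obstacle you isolate is not a technicality to be engineered around: it is fatal, because the inequality is false as stated. Take $n=2$, $c=1$, $\w^*=(1,0)$, $\w=(\cos\epsilon,\sin\epsilon)$, $\tw=(\cos\epsilon,-\sin\epsilon)$ with small $\epsilon>0$. The two normalized projections are $(\sin\epsilon,-\cos\epsilon)$ and $(\sin\epsilon,\cos\epsilon)$, so the left-hand side equals $2\cos\epsilon\to 2$ while the right-hand side equals $\|\w-\tw\|=2\sin\epsilon\to 0$; this is precisely the sign flip of your unit normal field $e(\cdot)$ as the arc from $\hat\w$ to $\hat\tw$ crosses $\w^*$, i.e.\ the $\cot\theta(\w,\w^*)$ blow-up you predicted (a perturbed version with $\w^*$ slightly outside $\mathrm{span}(\w,\tw)$ shows the failure is not an artifact of hitting $\pm\w^*$ exactly). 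The paper's proof obtains the clean constant $1/c^2$ by asserting that projecting $\w^*$ onto the orthogonal complements of $\w$ and of $\tw$ preserves the angle $\theta(\w,\tw)$, which collapses the left-hand side to $\bigl\|\tfrac{\w}{\|\w\|^2}-\tfrac{\tw}{\|\tw\|^2}\bigr\|=\tfrac{\|\w-\tw\|}{\|\w\|\|\tw\|}$; that assertion holds only when $\w^*$ lies in $\mathrm{span}(\w,\tw)$ and does not separate $\hat\w$ from $\hat\tw$, and it fails in the example above. So you should not try to ``bring the arc integral of $\cot\theta$ under control'' in full generality --- no argument can. The honest fix is to restrict the statement, e.g.\ to pairs for which $\sin\theta(\cdot,\w^*)\ge s_0>0$ along the connecting arc, at the cost of a factor $1/s_0$; the downstream Lipschitz claim (Lemma~\ref{lem:lipschitz}) must inherit the same restriction, consistent with the fact that $\tfrac{\partial f}{\partial\w}$ is genuinely singular at $\theta(\w,\w^*)\in\{0,\pi\}$.
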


\begin{proof}
1. Since by Cauchy-Schwarz inequality,
$$
\left\la \tw , \w - \frac{c\tw}{\|\tw\|}\right \ra = \tw^\t \w - c\|\tw\| \leq 0,
$$
we have
\begin{align}\label{eq:4}
\|\tw - \w\|^2 = & \; \left\|\left( 1-\frac{c}{\|\tw\|}  \right)\tw  - \left(\w -\frac{c\tw}{\|\tw\|}  \right) \right\|^2 \geq \left\| \left( 1-\frac{c}{\|\tw\|}  \right)\tw \right\|^2 + \left\| \w -\frac{c\tw}{\|\tw\|} \right\|^2 \notag \\
\geq & \; \left\| \w -\frac{c\tw}{\|\tw\|} \right\|^2  = c^2 \left\|\frac{\w}{\|\w\|} - \frac{\tw}{\|\tw\|}\right\|^2.
\end{align}
Therefore,
\begin{align*}
& \; |\theta(\w,\w^*)-\theta(\tw,\w^*)| \leq \theta(\w,\tw) = \theta\left(\frac{\w}{\|\w\|},\frac{\tw}{\|\tw\|}\right) \\
\leq & \; \pi\sin \left(\frac{\theta\left(\frac{\w}{\|\w\|},\frac{\tw}{\|\tw\|}\right)}{2}\right) = \frac{\pi}{2}\left\|\frac{\w}{\|\w\|} - \frac{\tw}{\|\tw\|}\right\|
\leq \frac{\pi}{2c}\|\w - \tw\|,
\end{align*}
where we used the fact $\sin(x)\geq \frac{2x}{\pi}$ for $x\in [0,\frac{\pi}{2}]$ and the estimate in (\ref{eq:4}).\\

2. Since $\Big(\I - \frac{\w\w^\t}{\|\w\|^2}\Big)\w^*$ is the projection of $\w^*$ onto the complement space of $\w$, and likewise for $\Big(\I - \frac{\tw\tw^\t}{\|\tw\|^2}\Big)\w^*$, the angle between  $\Big(\I - \frac{\w\w^\t}{\|\w\|^2}\Big)\w^*$ and $\Big(\I - \frac{\tw\tw^\t}{\|\tw\|^2}\Big)\w^*$ is equal to the angle between $\w$ and $\tw$. Therefore,
$$
\left\la \frac{\Big(\I - \frac{\w\w^\t}{\|\w\|^2}\Big)\w^*}{\Big\| \Big(\I - \frac{\w\w^\t}{\|\w\|^2}\Big)\w^*\Big\|} , 
\frac{\Big(\I - \frac{\tw\tw^\t}{\|\tw\|^2}\Big)\w^*}{\Big\| \Big(\I - \frac{\tw\tw^\t}{\|\tw\|^2}\Big)\w^*\Big\|} \right\ra = \left\la \frac{\w}{\|\w\|} , \frac{\tw}{\|\tw\|} \right\ra,
$$
and thus
\begin{align*}
\left\|\frac{1}{\|\w\|}
\frac{\Big(\I - \frac{\w\w^\t}{\|\w\|^2}\Big)\w^*}{\Big\| \Big(\I - \frac{\w\w^\t}{\|\w\|^2}\Big)\w^*\Big\|}  -
\frac{1}{\|\tw\|}
\frac{\Big(\I - \frac{\tw\tw^\t}{\|\tw\|^2}\Big)\w^*}{\Big\| \Big(\I - \frac{\tw\tw^\t}{\|\tw\|^2}\Big)\w^*\Big\|} \right\|  
= \left\| \frac{\w}{\|\w\|^2} - \frac{\tw}{\|\tw\|^2} \right\| 
= \frac{\|\w - \tw \|}{\|\w\|\|\tw\|}\leq \frac{1}{c^2}\|\w - \tw\|.
\end{align*}
The second equality above holds because
$$ 
\left\| \frac{\w}{\|\w\|^2} - \frac{\tw}{\|\tw\|^2} \right\|^2 
= \frac{1}{\|\w\|^2} + \frac{1}{\|\tw\|^2} - \frac{2\la \w, \tw \ra}{\|\w\|^2 \|\tw\|^2} = \frac{\|\w - \tw\|^2}{\|\w\|^2 \|\tw\|^2}.
$$
\end{proof}

\subsection*{\bf B. Proofs}
\begin{proof}[{\bf Proof of Proposition \ref{prop:bgd}}]
We rewrite the update (\ref{bcgd}) as
$$
\w^{t+1} = \arg\min_{\w\in\Q} \; \langle \w, \nabla f(\w^t) \rangle + \frac{1-\rho}{2\eta} \|\w-\w_f^t\|^2 + \frac{\rho}{2\eta} \|\w-\w^t\|^2 .
$$
Then since $\w^t, \, \w^{t+1} \in\Q$, we have
$$
\langle \w^{t+1}, \nabla f(\w^t) \rangle + \frac{1-\rho}{2\eta} \|\w^{t+1}-\w_f^t\|^2 + \frac{\rho}{2\eta} \|\w^{t+1}-\w^t\|^2 \leq \langle \w^t, \nabla f(\w^t) \rangle + \frac{1-\rho}{2\eta} \|\w^t-\w_f^t\|^2,
$$
or equivalently,
\begin{equation}\label{eq:1}
\langle \w^{t+1}-\w^t, \nabla f(\w^t) \rangle + \frac{1-\rho}{2\eta} \left(\|\w^{t+1}-\w_f^t\|^2-\|\w^t-\w_f^t\|^2 \right) + \frac{\rho}{2\eta}\|\w^{t+1}-\w^t\|^2  \leq  0.
\end{equation}
On the other hand, since $f$ has $L$-Lipschitz gradient, the descent lemma \cite{bertsekas1999nonlinear} gives
\begin{equation}\label{eq:2}
f(\w^{t+1})\leq f(\w^t) + \la \nabla f(\w^t), \w^{t+1}-\w^t \ra + \frac{L}{2}\|\w^{t+1}-\w^t\|^2.
\end{equation}
Combining (\ref{eq:1}) and (\ref{eq:2}) completes the proof.
\end{proof}

\bigskip

\begin{proof}[{\bf Proof of Lemma \ref{lem:obj}}]
We first evaluate $\E_\Z\left[\sigma(\Z\w)\sigma(\Z\w)^\t\right]$, $\E_\Z\left[\sigma(\Z\w)\sigma(\Z\w^*)^\t\right]$, and $\E_\Z\left[\sigma(\Z\w^*)\sigma(\Z\w^*)^\t\right]$. Let $\Z_i^\t$ be the $i$-th row vector of $\Z$. Since $\w\neq\0$, using Lemma \ref{lem:basic}, we have
$$
\E_\Z\left[\sigma(\Z\w)\sigma(\Z\w)^\t\right]_{ii} = \E\left[\sigma(\Z_i^\t\w)\sigma(\Z_i^\t\w)\right] = \E\left[1_{\{\Z_i^\t\w> 0\}}\right] = \frac{1}{2},
$$
and for $i\neq j$,
$$
\E_\Z\left[\sigma(\Z\w)\sigma(\Z\w)^\t\right]_{ij} = \E\left[\sigma(\Z_i^\t\w)\sigma(\Z_j^\t\w)\right] = \E\left[1_{\{\Z_i^\t\w> 0\}}\right]\E\left[1_{\{\Z_j^\t\w> 0\}}\right] = \frac{1}{4}.
$$
Therefore, $\E_\Z\left[\sigma(\Z\w)\sigma(\Z\w)^\t\right]=\E_\Z\left[\sigma(\Z\w^*)\sigma(\Z\w^*)^\t\right] = \frac{1}{4}\left(\I + \1\1^\t\right)$. 
Furthermore, 
$$
\E_\Z\left[\sigma(\Z\w)\sigma(\Z\w^*)^\t\right]_{ii} = \E\left[1_{\{\Z_i^\t\w> 0, \Z_i^\t\w^*> 0\}}\right] = \frac{\pi-\theta(\w,\w^*)}{2\pi},
$$
and $\E_\Z\left[\sigma(\Z\w)\sigma(\Z\w^*)^\t\right]_{ij}=\frac{1}{4}$. 
So,
$$
\E_\Z\left[\sigma(\Z\w)\sigma(\Z\w^*)^\t\right] = \frac{1}{4}\left(\left(1-\frac{2\theta(\w,\w^*)}{\pi}\right)\I + \1\1^\t \right).
$$ 
We thus have proved (\ref{eq:loss}) by noticing that
\begin{align*}
f(\v,\w) = & \; \frac{1}{2}\big(\v^\t \E_\Z[\sigma(\Z\w)^\t\sigma(\Z\w)]\v - 2\v^\t\E_\Z[\sigma(\Z\w)^\t\sigma(\Z\w^*)]\v^* \\
& \; +(\v^*)^\t\E_\Z[\sigma(\Z\w^*)^\t\sigma(\Z\w^*)]\v^*\big).
\end{align*}
Next, since (\ref{eq:grad_v}) is trivial, we only show (\ref{eq:grad_w}). Since $\theta(\w,\w^*) = \arccos\left(\frac{\w^\t\w^*}{\|\w\|}\right)$ is differentiable w.r.t. $\w$ at $\theta(\w,\w^*)\in(0,\pi)$, we have
$$
\frac{\partial f}{\partial \w}(\v,\w) = \frac{\v^\t\v^*}{2\pi}\frac{\partial \theta}{\partial \w}(\w,\w^*) = -\frac{\v^\t\v^*}{2\pi}\frac{\|\w\|^2\w^* - (\w^\t\w^*)\w}{\|\w\|^3\sqrt{1-\frac{(\w^\t\w^*)^2}{\|\w\|^2}}} = -\frac{\v^\t\v^*}{2\pi\|\w\|}\frac{\Big(\I - \frac{\w\w^\t}{\|\w\|^2}\Big)\w^*}{\Big\| \Big(\I - \frac{\w\w^\t}{\|\w\|^2}\Big)\w^*\Big\|}.
$$
\end{proof}

\bigskip

\begin{proof}[{\bf Proof of Proposition {\ref{prop}}}]
Suppose $\v^\t \v^*=0$ and $\frac{\partial f}{\partial \v}(\v,\w) = \0$, then by Lemma \ref{lem:obj},
\begin{equation}\label{eq:3}
0 = \v^\t\v^* = (\v^*)^\t(\I + \1\1^\t)^{-1}\left( \left(1- \frac{2}{\pi}\theta(\w,\w^*)\right)\I + \1\1^\t \right)\v^*.
\end{equation}
From (\ref{eq:3}) it follows that
\begin{equation}\label{eq:5}
\frac{2}{\pi}\theta(\w,\w^*) (\v^*)^\t (\I + \1\1^\t)^{-1} \v^*= (\v^*)^\t(\I + \1\1^\t)^{-1}\left( \I + \1\1^\t \right)\v^* = \|\v^*\|^2.
\end{equation}
On the other hand, from (\ref{eq:3}) it also follows that
$$
\left(\frac{2}{\pi}\theta(\w,\w^*)-1\right) (\v^*)^\t (\I + \1\1^\t)^{-1} \v^* = (\v^*)^\t(\I + \1\1^\t)^{-1} \1(\1^\t \v^*) = \frac{(\1^\t\v^*)^2}{m+1},
$$
where $\I$ is an $m$-by-$m$ identity matrix, and we used $(\I + \1\1^\t) \1 = (m+1)\1$.
Taking the difference of the two equalities above gives
$$
(\v^*)^\t(\I + \1\1^\t)^{-1}\v^* =  \|\v^*\|^2 - \frac{(\1^\t\v^*)^2}{m+1}. 
$$
By (\ref{eq:5}), we have 
$\theta(\w,\w^*) = \frac{\pi}{2}\frac{(m+1)\|\v^*\|^2}{(m+1)\|\v^*\|^2 - (\1^\t\v^*)^2}$, which requires
$$
\frac{\pi}{2}\frac{(m+1)\|\v^*\|^2}{(m+1)\|\v^*\|^2 - (\1^\t\v^*)^2}<\pi, \;
\mbox{or equivalently, } \; (\1^\t\v^*)^2 < \frac{m+1}{2}\|\v^*\|^2.
$$
Otherwise, $\frac{\partial f}{\partial \v}(\v,\w)$ and $\frac{\partial f}{\partial \w}(\v,\w)$ do not vanish simultaneously, and there is no critical point.\\

\end{proof}

\bigskip

\begin{proof}[{\bf Proof of Lemma \ref{lem:lipschitz}}]
It is easy to check that $\|\I + \1\1^\t\| = m+1$. Invoking Lemma \ref{lem:angle}.1 gives
\begin{align*}
\left\|\frac{\partial f}{\partial \v}(\v,\w) - \frac{\partial f}{\partial \v}(\tv,\tw) \right\| = & \; \frac{1}{4}\left\|\big(\I + \1\1^\t \big)(\v-\tv) + \frac{2}{\pi}(\theta(\w,\w^*) - \theta(\tw,\w^*) )\v^* \right\| \\
\leq & \; \frac{1}{4}\left((m+1)\| \v - \tv \| + \frac{2\|\v^*\|}{\pi} |\theta(\w,\w^*) - \theta(\tw,\w^*)|\right) \\
\leq & \; \frac{1}{4}\left((m+1)\| \v - \tv \| + \frac{\|\v^*\|}{c} \left\|\w - \tw \right\|\right) \\
\leq & \; \frac{1}{4}\left( m+1 + \frac{\|\v^*\|}{c} \right) \|(\v, \w) - (\tv, \tw)\|.
\end{align*}
Using Lemma \ref{lem:angle}.2, we further have
\begin{align*}
\left\|\frac{\partial f}{\partial \w}(\v,\w) - \frac{\partial f}{\partial \w}(\tv,\tw) \right\| = & \; \left\|\frac{\v^\t\v^*}{2\pi\|\w\|}
\frac{\Big(\I - \frac{\w\w^\t}{\|\w\|^2}\Big)\w^*}{\Big\| \Big(\I - \frac{\w\w^\t}{\|\w\|^2}\Big)\w^*\Big\|}  -
\frac{\tv^\t\v^*}{2\pi\|\tw\|}
\frac{\Big(\I - \frac{\tw\tw^\t}{\|\tw\|^2}\Big)\w^*}{\Big\| \Big(\I - \frac{\tw\tw^\t}{\|\tw\|^2}\Big)\w^*\Big\|} \right\|\\
\leq & \; \left\|\frac{\v^\t\v^*}{2\pi\|\w\|}
\frac{\Big(\I - \frac{\w\w^\t}{\|\w\|^2}\Big)\w^*}{\Big\| \Big(\I - \frac{\w\w^\t}{\|\w\|^2}\Big)\w^*\Big\|}  -
\frac{\v^\t\v^*}{2\pi\|\tw\|}
\frac{\Big(\I - \frac{\tw\tw^\t}{\|\tw\|^2}\Big)\w^*}{\Big\| \Big(\I - \frac{\tw\tw^\t}{\|\tw\|^2}\Big)\w^*\Big\|} \right\| \\
& \; + \left\|\frac{\v^\t\v^*}{2\pi\|\tw\|}
\frac{\Big(\I - \frac{\tw\tw^\t}{\|\tw\|^2}\Big)\w^*}{\Big\| \Big(\I - \frac{\tw\tw^\t}{\|\tw\|^2}\Big)\w^*\Big\|}  -
\frac{\tv^\t\v^*}{2\pi\|\tw\|}
\frac{\Big(\I - \frac{\tw\tw^\t}{\|\tw\|^2}\Big)\w^*}{\Big\| \Big(\I - \frac{\tw\tw^\t}{\|\tw\|^2}\Big)\w^*\Big\|} \right\| \\
\leq & \; \frac{|\v^\t\v^*|}{2 \pi c^2 }\|\w-\tw \| + \frac{\|\v^*\|}{2\pi c}\|\v-\tv\|\\
\leq & \; \frac{(C+c)\|\v^*\|}{2\pi c^2}\|(\v, \w) - (\tv, \tw)\|.
\end{align*}
Combining the two inequalities above validates the claim.
\end{proof}

\bigskip

\begin{proof}[{\bf Proof of Lemma \ref{lem:cgd}}]
(\ref{eq:cgrad_v}) is true because
$\frac{\partial \ell}{\partial \v}(\v,\w;\Z)$ is linear in $\v$.
To show (\ref{eq:cgrad_w}), by (\ref{eq:cbp_w}) and the fact that $\mu^{\prime} = \sigma$, we have
\begin{align*}
\E_\Z \left[ \g(\v,\w;\Z)\right] = & \; \E_\Z \left[\left(\sum_{i=1}^m v_i \sigma(\Z^\t_i\w)  - \sum_{i=1}^m v^*_i\sigma(\Z^\t_i\w^*)  \right)\left(\sum_{i=1}^m \Z_i v_i \sigma(\Z^\t_i\w) \right) \right] \\
= & \; \E_\Z \left[\left(\sum_{i=1}^m v_i 1_{\{\Z^\t_i\w>0\}}  - \sum_{i=1}^m v^*_i1_{\{\Z^\t_i\w^*>0\}}  \right)\left(\sum_{i=1}^m 1_{\{\Z^\t_i\w>0\}} v_i\Z_i \right)\right].
\end{align*}
Invoking Lemma \ref{lem:basic}, we have
\begin{equation}
\E\left[ \Z_i 1_{\{\Z_i^\t \w>0, \Z_j^\t \w>0\}}\right] = 
\begin{cases}
\frac{1}{\sqrt{2\pi}} \frac{\w}{\|\w\|} & \mbox{if } i=j, \\
\frac{1}{2\sqrt{2\pi}} \frac{\w}{\|\w\|} & \mbox{if } i\neq j,
\end{cases}
\end{equation}
and
\begin{equation}
\E\left[ \Z_i 1_{\{\Z_i^\t \w>0, \Z_j^\t \w^* >0\}}\right] = 
\begin{cases}
\frac{\cos(\theta(\w,\w^*)/2)}{\sqrt{2\pi}} \frac{\frac{\w}{\|\w\|} + \w^* }{\left\|\frac{\w}{\|\w\|} + \w^*  \right\|} & \mbox{if } i=j,  \\
\frac{1}{2\sqrt{2\pi}} \frac{\w}{\|\w\|} & \mbox{if } i\neq j.
\end{cases}
\end{equation}
Therefore,
\begin{align*}
\E_\Z \left[ \g(\v,\w;\Z)\right] = & \;\sum_{i=1}^m v_i^2 \E\left[ \Z_i 1_{\{\Z_i^\t \w>0\}}\right] +  \sum_{i=1}^m \sum_{\overset{j=1}{j\neq i}}^m v_i v_j \E\left[ \Z_i 1_{\{\Z_i^\t \w>0, \Z_j^\t \w>0\}}\right]  \\
& \; - \sum_{i=1}^m v_i v_i^* \E\left[ \Z_i 1_{\{\Z_i^\t \w>0, \Z_i^\t \w^* >0\}}\right] - \sum_{i=1}^m \sum_{\overset{j=1}{j\neq i}}^m v_i v_j^* \E\left[ \Z_i 1_{\{\Z_i^\t \w>0, \Z_j^\t \w^*>0\}}\right] \\
= & \; \frac{1}{2\sqrt{2\pi}}\left(\|\v\|^2 + (\1^\t\v)^2 \right) \frac{\w}{\|\w\|} 
- \cos\left(\frac{\theta(\w,\w^*)}{2}\right)\frac{\v^\t\v^*}{\sqrt{2\pi}} \frac{\frac{\w}{\|\w\|} + \w^* }{\left\|\frac{\w}{\|\w\|} + \w^*  \right\|} \\
& \; - \frac{1}{2\sqrt{2\pi}}\left( (\1^\t\v)(\1^\t\v^*) - \v^\t\v^* \right)\frac{\w}{\|\w\|},
\end{align*}
which is exactly (\ref{eq:cgrad_w}).
\end{proof}

\bigskip

\begin{proof}[{\bf Proof of Lemma \ref{cor}}]
Notice that $\Big(\I - \frac{\w\w^\t}{\|\w\|^2}\Big)\w = \0$ and $\|\w^*\| = 1$, if $\theta(\w,\w_*)\neq 0, \pi$, then we have
\begin{align*}
 & \; \left\langle \E_\Z \Big[\g(\v,\w; \Z)\Big], \frac{\partial f}{\partial \w}(\v,\w) \right\rangle \\
= & \;
\cos\left(\frac{\theta(\w,\w^*)}{2}\right)\frac{(\v^{\t}\v^*)^2}{(\sqrt{2\pi})^3} \left\la \frac{1}{\|\w\|}
\frac{\Big(\I - \frac{\w\w^\t}{\|\w\|^2}\Big)\w^*}{\Big\| \Big(\I - \frac{\w\w^\t}{\|\w\|^2}\Big)\w^*\Big\|} , \frac{\w^*}{\left\|\frac{\w}{\|\w\|} + \w^*\right\|} \right\ra \\
= & \; \cos\left(\frac{\theta(\w,\w^*)}{2}\right)\frac{(\v^{\t}\v^*)^2}{(\sqrt{2\pi})^3} \frac{\|\w\|^2 - (\w^\t\w^*)^2}{\|\|\w\|^2\w^* - \w(\w^\t\w^*)\| \, \|\w+\|\w\|\w^*\|} \\
= & \; \cos\left(\frac{\theta(\w,\w^*)}{2}\right)\frac{(\v^{\t}\v^*)^2}{(\sqrt{2\pi})^3}  \frac{\|\w\|^2 - (\w^\t\w^*)^2}{\sqrt{\|\w\|^4 -\|\w\|^2(\w^\t\w^*)^2} \sqrt{2(\|\w\|^2+ \|\w\|(\w^\t \w^*))}} \\
= & \; \cos\left(\frac{\theta(\w,\w^*)}{2}\right)\frac{(\v^{\t}\v^*)^2}{4(\sqrt{\pi\|\w\|})^3}  \frac{\|\w\|^2 - (\w^\t\w^*)^2}{\sqrt{\|\w\|^2 -(\w^\t\w^*)^2} \sqrt{\|\w\|+ (\w^\t \w^*)}} \\
= & \; \cos\left(\frac{\theta(\w,\w^*)}{2}\right)
\frac{(\v^{\t}\v^*)^2\sqrt{1-\frac{\w^\t\w^*}{\|\w\|}}}{4(\sqrt{\pi})^3\|\w\|} \\
= & \; \cos\left(\frac{\theta(\w,\w^*)}{2}\right)\frac{(\v^{\t}\v^*)^2\sqrt{1 - \cos(\theta(\w,\w^*))}}{4(\sqrt{\pi})^3\|\w\|} \\
= & \; \frac{\sin\left(\theta(\w,\w^*)\right)}{2(\sqrt{2\pi})^3\|\w\|}(\v^{\t}\v^*)^2.
\end{align*}
\end{proof}

\bigskip

\begin{proof}[{\bf Proof of Lemma \ref{lem:correlated}}]
Denote $\theta:= \theta(\w,\w^*)$.
By Lemma \ref{lem:obj}, we have
\begin{align*}
\frac{\partial f}{\partial \v}(\v,\w) = \frac{1}{4}\big(\I + \1\1^\t \big) \v - \frac{1}{4}\left( \left(1-\frac{2\theta}{\pi} \right)\I + \1\1^\t \right)\v^*.
\end{align*}
Since $\|\w\|=1$, Lemma \ref{lem:cgd} gives
\begin{equation}\label{eq:g}
\E_\Z \Big[\g(\v,\w; \Z)\Big]  = \frac{h(\v,\v^*)}{2\sqrt{2\pi}}\w - \cos\left(\frac{\theta}{2}\right)\frac{\v^\t\v^*}{\sqrt{2\pi}}\frac{\w + \w^*}{\left\|\w + \w^*\right\|},
\end{equation}
where 
\begin{align}\label{eq:h}
h(\v,\v^*) = & \; \|\v\|^2+ (\1^\t\v)^2 - (\1^\t\v)(\1^\t\v^*) + \v^\t\v^* \notag \\
= & \; \v^\t\left( \I + \1\1^\t \right)\v - \v^\t(\1\1^\t - \I)\v^* \notag \\
= & \; \v^\t\left( \I + \1\1^\t \right)\v - \v^\t \left(\1\1^\t + \left(1-\frac{2\theta}{\pi}\right) \I \right)\v^* + 2\left(1 - \frac{\theta}{\pi}\right)\v^\t\v^*  \notag \\
= & \; 4 \v^\t \frac{\partial f}{\partial \v}(\v,\w) + 2\left(1 - \frac{\theta}{\pi}\right)\v^\t\v^*,
\end{align}
and by Lemma \ref{cor},
\begin{equation*}
\left\langle \E_\Z \Big[\g(\v,\w; \Z)\Big], \frac{\partial f}{\partial \w}(\v,\w) \right\rangle  = \frac{\sin\left(\theta\right)}{2(\sqrt{2\pi})^3}(\v^\t\v^*)^2.
\end{equation*}
Hence, for some $A$ depending only on $C$, we have
\begin{align*}
& \; \left\|\E_\Z \Big[\g(\v,\w; \Z)\Big] \right\|^2 \\
= & \; \left\| \frac{2 \v^\t \frac{\partial f}{\partial \v}(\v,\w)}{\sqrt{2\pi}} \w + \cos\left(\frac{\theta}{2}\right)\frac{\v^\t\v^*}{\sqrt{2\pi}}\left(\w - \frac{\w + \w^*}{\left\|\w + \w^*\right\|} \right) + \left(1-\frac{\theta}{\pi}-\cos\left(\frac{\theta}{2}\right)\right) \frac{\v^\t\v^*}{\sqrt{2\pi}}\w\right\|^2 \\
\leq & \; \frac{6C^2}{\pi} \left\|\frac{\partial f}{\partial \v}(\v,\w)\right\|^2 + \cos^2\left(\frac{\theta}{2}\right)\frac{3(\v^\t\v^*)^2}{2\pi}\left\|\w - \frac{\w + \w^*}{\left\|\w + \w^*\right\|} \right\|^2  \\
& \; + \left(1-\frac{\theta}{\pi}-\cos\left(\frac{\theta}{2}\right)\right)^2 \frac{3(\v^\t\v^*)^2}{2\pi} \\
\leq & \; \frac{6C^2}{\pi}\left\|\frac{\partial f}{\partial \v}(\v,\w)\right\|^2 + 
\cos^2\left(\frac{\theta}{2}\right) \frac{3\theta^2}{8\pi} (\v^\t\v^*)^2 
+  \left(1-\frac{\theta}{\pi}-\cos\left(\frac{\theta}{2}\right)\right)^2 \frac{3(\v^\t\v^*)^2}{2\pi} \\
\leq & \; \frac{6C^2}{\pi}\left\|\frac{\partial f}{\partial \v}(\v,\w)\right\|^2 + 
\frac{3\pi}{8}\cos^2\left(\frac{\theta}{2}\right) \sin^2\left(\frac{\theta}{2}\right) (\v^\t\v^*)^2 + \frac{3\sin(\theta)}{2\pi}(\v^\t\v^*)^2  \\
\leq & \; A\left(\left\|\frac{\partial f}{\partial \v}(\v,\w)\right\|^2 + \left\langle \E_\Z \Big[\g(\v,\w; \Z)\Big], \frac{\partial f}{\partial \w}(\v,\w) \right\rangle \right),
\end{align*}
 where the equality is due to (\ref{eq:g}) and (\ref{eq:h}), the first inequality is due to Cauchy-Schwarz inequality, the second inequality holds because the angle between $\w$ and $\frac{\w + \w^*}{\left\|\w + \w^*\right\|}$ is $\frac{\theta}{2}$ and 
$\left\|\w - \frac{\w + \w^*}{\left\|\w + \w^*\right\|} \right\| \leq \frac{\theta}{2}$, whereas the third inequality is due to $\sin(x)\geq \frac{2x}{\pi}$, $\cos(x)\geq 1-\frac{2x}{\pi}$, and 
$$
\left(1-\frac{2x}{\pi} -\cos(x)\right)^2\leq \left(\cos(x) - 1+ \frac{2x}{\pi} \right)\left(\cos(x) +  1 - \frac{2x}{\pi}\right) \leq \sin(x)(2\cos(x)) = \sin(2x),
$$
for all $x\in [0,\frac{\pi}{2}]$.
\end{proof}

\bigskip

\begin{proof}[{\bf Proof of Theorem \ref{thm}} ]
To leverage Lemma \ref{lem:lipschitz} and Lemma \ref{lem:correlated}, we would need the boundedness of $\{\v^t\}$. Due to the coerciveness of $f$ w.r.t $\v$, there exists $C_0>0$, such that $\|\v\|\leq C_0$ for any $\v\in\{\v\in\R^m: f(\v,\w)\leq f(\v^0,\w^0) \mbox{ for some } \w \}$. In particular, $\|\v^0\|\leq C_0$. Using induction, suppose we already have $f(\v^{t},\w^{t})\leq f(\v^0,\w^0)$ and $\|\v^t\|\leq C_0$. If $\w^t = \pm\w^*$, then $\w^{t+1} = \w^{t+2} = \dots = \pm\w^*$, and the original problem reduces to a quadratic program in terms of $\v$. So $\{\v^t\}$ will converge to $\v^*$ or $(\I + \1\1^\t)^{-1}(\1\1^\t - \I)\v^*$ by choosing a suitable step size $\eta$. In either case, we have $\left\|\E_\Z \Big[\frac{\partial \ell}{\partial \v}(\v^t,\w^t; \Z)\Big]\right\|$ and $\left\|\E_\Z \Big[\g(\v^t,\w^t; \Z)\Big]\right\|$ both converge to 0. Else if $\w^t \neq \pm\w^*$, we define for $a\in[0,1]$ that
$$
\v^t(a) :=  \v^t - a(\v^{t+1} - \v^t) = \v^t - a \eta \E_\Z  \left[\frac{\partial \ell}{\partial \v}(\v^t,\w^t;\Z)\right]
$$ 
and 
$$
\w^t(a) := \w^t - a(\w^{t+1/2} - \w^t) = \w^t - a\eta \E_\Z \left[\g(\v^t,\w^t; \Z)\right],
$$ 
which satisfy 
$$
\v^t(0) = \v^t, \; \v^t(1) = \v^{t+1}, \; \w^t(0) = \w^t, \; \w^t(1) = \w^{t+1/2}.
$$
Let us fix $0<c<1$ and $C\geq C_0$. By the expressions of $\E_\Z  \left[\frac{\partial \ell}{\partial \v}(\v^t,\w^t;\Z)\right]$ and $\E_\Z \left[\g(\v^t,\w^t; \Z)\right]$ given in Lemma \ref{lem:cgd}, and since $\|\w^t\|=1$, for sufficiently small $\tilde{\eta}$ depending on $C_0$, with $\eta\leq \tilde{\eta}$, it holds that $\|\v^t(a)\|\leq C$ and $\|\w^t(a)\|\geq c$ for all $a\in[0,1]$. Possibly at some point $a_0$ where $\theta(\w^t(a_0),\w^*) = 0$ or $\pi$, such that $\frac{\partial f}{\partial \w}(\v^t(a_0),\w^t(a_0))$ does not exist. Otherwise,  $\left\| \frac{\partial f}{\partial \w}(\v^t(a),\w^t(a)) \right\|$ is uniformly bounded for all $a\in[0,1]/\{a_0\}$, which makes it integrable over the interval $[0,1]$. Then we have
\begin{align}\label{eq:descent}
f(\v^{t+1}, \w^{t+1}) = & \; f(\v^{t+1}, \w^{t+1/2}) = f(\v^t+ (\v^{t+1} -\v^t), \w^t+ (\w^{t+1/2}-\w^t)) \notag \\
= & \; f(\v^t, \w^t) + \int_{0}^1 \left\la \frac{\partial f}{\partial \v}(\v^t(a),\w^t(a)) , \v^{t+1} -\v^t \right\ra \mathrm{d}a  \notag\\ 
& \; +  \int_{0}^1 \left\la \frac{\partial f}{\partial \w}(\v^t(a),\w^t(a)), \w^{t+1/2} - \w^t \right\ra   \mathrm{d}a \notag\\
= & \; f(\v^{t}, \w^{t}) + \left\la \frac{\partial f}{\partial \v}(\v^t,\w^t) , \v^{t+1} -\v^t  \right\ra +  \left\la \frac{\partial f}{\partial \w}(\v^t,\w^t) , \w^{t+1/2} -\w^t \right\ra \notag\\
& \; + \int_{0}^1 \left\la \frac{\partial f}{\partial \v}(\v^t(a),\w^t(a)) - \frac{\partial f}{\partial \v}(\v^t,\w^t) , \v^{t+1} -\v^t \right\ra \mathrm{d}a  \notag\\
& \; \qquad \qquad +  \int_{0}^1 \left\la \frac{\partial f}{\partial \w}(\v^t(a),\w^t(a)) - \frac{\partial f}{\partial \w}(\v^t,\w^t) , \w^{t+1/2} - \w^t \right\ra   \mathrm{d}a 
\notag\\
\leq & \;  f(\v^{t}, \w^{t}) -\left(\eta -\frac{L\eta^2}{2}\right)\left\|\frac{\partial f}{\partial \v}(\v^t,\w^t) \right\|^2 - \eta\left\la \frac{\partial f}{\partial \w}(\v^t,\w^t), \E_\Z \Big[\g(\v^t,\w^t; \Z)\Big] \right\ra \notag \\
& \; + \frac{L\eta^2}{2} \left\|\E_\Z \Big[\g(\v^t,\w^t; \Z)\Big] \right\|^2 \notag \\
\leq & \;  f(\v^{t}, \w^{t}) -\left(\eta -(1+A)\frac{L\eta^2}{2}\right) \left\|\frac{\partial f}{\partial \v}(\v^t,\w^t) \right\|^2 \notag \\
& \; - \left(\eta -\frac{AL\eta^2}{2}\right)  \left\la \frac{\partial f}{\partial \w}(\v^t,\w^t), \E_\Z \Big[\g(\v^t,\w^t; \Z)\Big] \right\ra. 
\end{align}
The third equality is due to the fundamental theorem of calculus. In the first inequality, we called Lemma \ref{lem:lipschitz} for $(\v^t, \w^t)$ and $(\v^t(a), \w^t(a))$ with $a\in[0,1]/\{a_0\}$. In the last inequality, we used Lemma \ref{lem:correlated}. 
So when $\eta < \eta_0:= \min\left\{\frac{2}{(1+A)L}, \tilde{\eta}\right\}$, we have $f(\v^{t+1},\w^{t+1})\leq f(\v^0,\w^0)$ and thus $\|\v^{t+1}\|\leq C_0$.

\medskip

Summing up the inequality (\ref{eq:descent}) over $t$ from $0$ to $\infty$ and using $f\geq 0$, we have
\begin{align*}
& \; \eta\sum_{t=0}^\infty \left(1 -(1+A)\frac{L\eta}{2}\right)\left\|\frac{\partial f}{\partial \v}(\v^t,\w^t) \right\|^2 + \left(1 -\frac{AL\eta}{2}\right)  \left\la \frac{\partial f}{\partial \w}(\v^t,\w^t), \E_\Z \Big[\g(\v^t,\w^t; \Z)\Big] \right\ra \\
\leq & \;  f(\v^0,\w^0)<\infty.
\end{align*}
Hence,
$$
\lim_{t\to \infty}\left\|\frac{\partial f}{\partial \v}(\v^t,\w^t)\right\| = 0
$$ 
and 
$$\lim_{t\to \infty} \left\la \frac{\partial f}{\partial \w}(\v^t,\w^t), \E_\Z \Big[\g(\v^t,\w^t; \Z)\Big] \right\ra  = 0.
$$
Invoking Lemma \ref{lem:correlated} again, we further have
$$
\lim_{t\to \infty}\left\|\E_\Z \Big[\g(\v^t,\w^t; \Z)\Big]\right\| = 0,
$$ 
which completes the proof.
\end{proof}











\end{document}